\documentclass{article}

\usepackage{iclr2026_conference,times}
\usepackage{subcaption}
\usepackage{algorithm}
\usepackage{algorithmic}
\usepackage{graphicx}
\usepackage{siunitx}

\usepackage[utf8]{inputenc} 
\usepackage[T1]{fontenc}    
\usepackage{hyperref}       
\usepackage{url}            
\usepackage{booktabs}       
\usepackage{amsfonts}       
\usepackage{nicefrac}       
\usepackage{microtype}      
\usepackage{xcolor}         
\usepackage{amsmath}
\usepackage{multirow}
\usepackage{amsthm}
\usepackage{cleveref}
\usepackage{wrapfig}
\usepackage{enumitem}
\theoremstyle{plain}
\newtheorem{theorem}{Theorem}[section]
\newtheorem{proposition}[theorem]{Proposition}

\theoremstyle{definition}

\theoremstyle{remark}

\newcolumntype{L}[1]{>{\raggedright\arraybackslash}p{#1}}  
\newcolumntype{C}[1]{>{\centering\arraybackslash}p{#1}} 
\iclrfinalcopy

\title{Machine Unlearning under Retain–Forget Entanglement}

\author{%
  Jingpu Cheng\\
      Department of Mathematics \\
   National University of Singapore\\
  \texttt{chengjingpu@u.nus.edu} \\
 \And
  \hspace{1.9cm} Ping Liu \\
\hspace{1.9cm} Department of Computer Science\\
\hspace{1.9cm} University of Nevada Reno\\
\hspace{1.9cm} \texttt{pino.pingliu@gmail.com} \\
  \And
 \\
    \And
  Qianxiao Li \\
  Department of Mathematics \\
 Institute for Functional Intelligent Materials \\
 National University of Singapore \\
  \texttt{qianxiao@nus.edu.sg} \\
  \And
    Chi Zhang\thanks{Corresponding: \texttt{czhang24@nus.edu.sg}} \\
  Department of Mathematics \\
 National University of Singapore \\
  \texttt{czhang24@nus.edu.sg} \\
}
\begin{document}

\maketitle

\begin{abstract}
Forgetting a subset in machine unlearning is rarely an isolated task. Often, retained samples that are closely related to the forget set can be unintentionally affected, particularly when they share correlated features from pretraining or exhibit strong semantic similarities. To address this challenge, we propose a novel two-phase optimization framework specifically designed to handle such retain–forget entanglements. In the first phase, an augmented Lagrangian method increases the loss on the forget set while preserving accuracy on less-related retained samples. The second phase applies a gradient projection step, regularized by the Wasserstein-2 distance, to mitigate performance degradation on semantically related retained samples without compromising the unlearning objective. We validate our approach through comprehensive experiments on multiple unlearning tasks, standard benchmark datasets, and diverse neural architectures, demonstrating that it achieves effective and reliable unlearning while outperforming existing baselines in both accuracy retention and removal fidelity. Our code is available \href{https://github.com/Jingpu-Cheng/unlearning-entanglement}{here}.

\end{abstract}

\section{Introduction}
The indelible memory of machine learning systems presents a paradoxical challenge: what happens when we need algorithms to forget? Consider a face recognition system deployed for secure access. When an employee resigns, their biometric signature cannot simply be deactivated—it must be completely expunged from the underlying machine learning model. This ability to erase specific information extends to the broader concept known as ``machine unlearning''~\citep{cao2015towards}, which aims to selectively remove the impact of specific data from trained models. In fact, the importance of unlearning extends beyond the general concept itself, with critical applications in meeting legal obligations~\citep{mantelero2013eu}, mitigating harmful representational biases~\citep{mehrabi2021survey}, and repairing models from mislabeled or poisoned training data~\citep{northcutt2021pervasive}.

Recent work has investigated a range of unlearning scenarios, including random-sample unlearning~\citep{golatkar2020eternal, izzo2021approximate}, class-wise unlearning~\citep{kurmanji2023towards}, and concept-level unlearning, where the forget set does not necessarily align with class labels~\citep{zhu2024decoupling}. More recently, several methods~\citep{foster2024fast, seo2025revisiting, xu2024don} have introduced efficient post-hoc or feature-space–aware solutions. Together, these approaches have significantly advanced our understanding of what it means for a model to “forget”.

Yet, forgetting is rarely an isolated task. Removing the influence of one group of data often directly affects another group that is closely correlated with it. For instance, forgetting toxic statements involving a minority group may inadvertently alter the model’s behavior on non-toxic statements about the same group~\citep{shen2024camu}. Similarly, forgetting one subclass of images within a broader category can disrupt predictions on closely related subclasses~\citep{fan2024challenging}. Existing works typically assess retain performance by averaging over the entire retain set, paying little attention to these sensitive, correlated subsets, where performance is both more fragile and more consequential.
We therefore focus on the challenge of \emph{retain–forget entanglement}, where certain retained samples are closely tied to the forget set and particularly susceptible to unintended degradation. To mitigate the resulting performance drops in these sensitive subsets, we propose a two-stage framework based on constrained optimization. In the first stage, an augmented Lagrangian method enforces forgetting by increasing the loss on the forget set while preserving accuracy on less-correlated retained samples. In the second stage, the model is refined through gradient projection to restore performance on retained samples that are more strongly correlated with the forget set, without compromising the forgetting objective. To further stabilize the process and enhance generalization, we also regularize the loss distribution using the Wasserstein-2 distance during this stage.

We evaluate our method across a variety of subclass-level unlearning scenarios, covering diverse forgetting tasks, multiple neural network architectures, and standard benchmark datasets. The results demonstrate that our approach consistently achieves effective forgetting while maintaining high accuracy on the retained data. In structured selective unlearning settings, it significantly outperforms prior methods, demonstrating robustness and reliability without compromising the intended forgetting effect. Importantly, it preserves performance on retained samples that are closely related to the forget set, ensuring that sensitive subsets remain largely unaffected.

Our contributions can be summarized as follows:
\begin{itemize}[itemsep=0pt, topsep=0pt, parsep=0.5pt, partopsep=0pt]
    \item \textbf{Highlighting retain--forget entanglement:} We focus on a correlation-aware unlearning setting, where the forget set is entangled with another group of data. This setting better reflects real-world unlearning demands and introduces new technical challenges due to significant distributional overlap with the retained data.
    \item \textbf{A novel two-stage unlearning framework:} We propose a two-stage optimization-based framework to address this challenge. The first stage uses an augmented Lagrangian method to enforce forgetting while preserving performance on less-correlated samples. The second stage applies gradient projection with Wasserstein-2 distance regularization to recover performance on sensitive retained samples without compromising the forgetting objective.
    \item \textbf{Comprehensive evaluation:} We provide a comprehensive empirical evaluation across diverse tasks, architectures, and datasets, demonstrating that our method achieves strong forgetting performance while retaining accuracy on preserved data.
\end{itemize}

\section{Related Works}
\vspace{-0.1in}
\paragraph{Constrained Optimization in Machine Learning}
Constrained optimization is widely used in machine learning to enforce domain-specific requirements like fairness and safety~\citep{cotter2019optimization,zafar2019fairness,achiam2017constrained,liu2022constrained}. In fairness-aware learning, these constraints prevent discriminatory predictions and are naturally framed as optimization problems~\citep{donini2018empirical,zafar2019fairness,caton2024fairness}. Classical techniques, such as penalty methods~\cite{berk2017convex} and Lagrangian-based approaches~\citep{cruzfairgbm,celis2019classification,cotter2019optimization,lokhande2020fairalm}, have proven effective in these settings. Similarly, in reinforcement learning, safety constraints guide agents away from risky actions~\citep{chow2018risk,liu2022constrained}, often handled through primal-dual optimization to penalize constraint violations~\citep{achiam2017constrained,liang2018accelerated,bohez2019value}.

\vspace{-0.15in}

\paragraph{Machine Unlearning}
The concept of machine unlearning was formalized by~\citep{cao2015towards}, requiring model outputs indistinguishable from retraining without the deleted data. However, full retraining is often infeasible for large-scale models, motivating approximate methods~\citep{golatkar2020eternal,golatkar2020forgetting,izzo2021approximate,thudi2022unrolling,mehta2022deep}. Many build on the framework of~\cite{ginart2019making}, gradient-based updates~\citep{golatkar2020eternal,fan2024salun,patel2025learning}, sparsity-based pruning~\citep{jia2023model}, prompt editing~\citep{liu2024large}, fisher and influence based methods~\citep{foster2024fast,shi2024deepclean,wu2022puma} and adversarial approaches~\citep{di2024adversarial}. In particular, fine-tuning~\citep{warnecke2021machine,zhang2024parameter,zhangweight} has been shown to be an effective post-training approach and is widely used in machine unlearning. For example, \cite{kurmanji2023towards} proposed post-training methods by discouraging the model from correctly predicting labels on a forget set. Meanwhile, retain–forget entanglement has been show to impact unlearning, where accuracy drops are often concentrated on retained examples most similar to the forget set~\citep{zhao2024whatmakes,chang2025whichretain}. For example, 
subclass-level forgetting~\citep{zhu2024decoupling,foster2024fast,seo2025revisiting} considers settings where the forget subclass is semantically close to other subclasses. Yet performance is often reported as an average over the entire retain set, which can mask degradation on the correlated subset. Similarly, in LLM unlearning~\citep{maini2024tofu, jin2024rwku,chang2025whichretain,choi2025optout}, a neighbor set is often used for evaluation or regularization; in concept erasure for generative models~\citep{gandikota2024unified,xie2025erasing,liu2025erased}, preservation sets are used to maintain overall performance. Yet, these sets need not be closely related to the erased targets, whereas we explicitly decompose the retain set into adjacent and remote subsets and report the performance on both of them.

\section{Problem Formulation}

Let $\mathcal{D} = \{(x_i, y_i)\}_{i=1}^N$ be a dataset of $N$ samples, where $x_i \subset \mathcal{X}$ denotes an input and $y_i \in \mathcal{Y}$ is its corresponding label. Let $f_{\theta_0}(x)$ be a model trained on $\mathcal{D}$ with parameters $\theta_0$. Given a subset $\mathcal{D}_f \subset \mathcal{D}$, the goal of machine unlearning is to obtain updated parameters $\tilde{\theta}$ such that the resulting model $f_{\tilde{\theta}}(x)$ effectively forgets $\mathcal{D}_f$, while preserving performance on the remaining data $\mathcal{D}_r := \mathcal{D} \setminus \mathcal{D}_f$.

Classical formulations of machine unlearning typically do not assume further structures in the retain dataset. However, in many applications, forgetting $\mathcal{D}_f$ affects not only average performance on $\mathcal{D}_r$, but disproportionately impacts a correlated portion inside $\mathcal{D}_r$~\citep{fan2024challenging}. We therefore
conceptually split the retain set into two parts:
\[
\mathcal{D}_r=\mathcal{D}_r^{\text{adj}} \ \cup\  \mathcal{D}_r^{\text{rem}}, 
\qquad \mathcal{D}_r^{\text{adj}}\cap \mathcal{D}_r^{\text{rem}}=\emptyset.
\]
Here, the adjacent retain set $\mathcal{D}_r^{\text{adj}}$ consists of retained examples that are correlated with $\mathcal{D}_f$ and thus more sensitive to forgetting, while the remote retain set $\mathcal{D}_r^{\text{rem}}$ comprises the remaining, less-related retained examples, which we refer to as the remote samples.

In practice, the entanglement between the forget set and retained samples can arise from different sources. One common scenario is subclass-level unlearning, where the forget set constitutes a fine-grained subclass within a broader class. For example, if a model is trained on the 20 superclasses of CIFAR-100 and the forget set consists of one subclass, we can define $\mathcal{D}_r^{\text{adj}}$ as the remaining samples from the same superclass and $\mathcal{D}_r^{\text{rem}}$ as the rest of the dataset. Another scenario occurs when retained samples form a semantically related group with the forget set. For instance, in a language dataset containing normal and offensive sentences, comments referring to the same group of people may be strongly correlated with the forget set.


The goals of this retain-forget entangled machine unlearning are therefore to obtain an updated model such that

\begin{enumerate}[itemsep=0pt, topsep=0pt, parsep=0pt, partopsep=0pt]
    \item The model retains its performance on $\mathcal{D}_r$, especially on samples belonging to $D_r^{\text{adj}}$ that have strong correlation to $\mathcal{D}_f$.
    
    \item The model forgets the forget set $\mathcal{D}_f$ by removing or mitigating its influence.
\end{enumerate}

It is important to note that the definition of ``forgetting'' can vary depending on the application. In privacy-focused contexts~\citep{cao2015towards}, the objective is often for $f_{\tilde{\theta}}$ to emulate a model retrained from scratch on the retained set $\mathcal{D}_r$. In contrast, there are scenarios that prioritize maximally reducing the model's performance on the forget set $\mathcal{D}_f$, as studied in~\citep{choi2023towards}. This approach is particularly relevant when $\mathcal{D}_f$ contains undesirable patterns, such as social biases, offensive content, or behaviors subject to withdrawal requests, where the goal is for the model to completely disregard the influence of these samples. In this work, we adopt the latter perspective.

\section{Methods}
Machine unlearning naturally poses a multi-objective challenge: removing the influence of the forget set while maintaining overall performance. In the retain-forget entangled setting, this becomes more difficult due to the semantic and distributional entanglement between the forget set $\mathcal{D}_f$ and the strongly correlated retain set $\mathcal{D}_r^{\text{adj}}$. To address this challenge, we introduce a two-stage optimization framework in this section.


\subsection{Stage 1: Forgetting via Controlled Optimization}

The first stage of our framework aims to aggressively increase the loss on the forget set while preventing substantial degradation on the less-related retain set.
Formally, let \textcolor{black}{$\mathcal{L}_f(\theta): = \mathcal{L}_f(\theta; \mathcal{D}_f)$, $\mathcal{L}_r^{\text{adj}}(\theta):= \mathcal{L}_r^{\text{adj}}(\theta; \mathcal{D}_r^{\text{adj}})$, and $\mathcal{L}_r^{\text{rem}}(\theta):= \mathcal{L}_r^{\text{rem}}(\theta; \mathcal{D}_r^{\text{rem}})$} denote the losses on $\mathcal{D}_f$, $\mathcal{D}_r^{\text{adj}}$, and $\mathcal{D}_r^{\text{rem}}$, and let $\theta_0$ be the parameters of the original model.
We formulate Stage~1 as the constrained optimization problem
\begin{equation}
    \min_{\theta} \; -\mathcal{L}_f(\theta)  \quad 
    \text{subject to} \quad 
    \mathcal{L}_r^{\text{rem}}(\theta) = \mathcal{L}_r^{\text{rem}}(\theta_0).
\end{equation}

We adopt an augmented Lagrangian formulation~\citep{bertsekas2014constrained} to provide an adaptive way of balancing the objective and the constraint:
\begin{equation}
    \label{eq:stage1}
    \mathcal{L}_{\text{aug}}(\theta; \lambda, \mu) = 
    -\mathcal{L}_f(\theta) 
    + \lambda \big(\mathcal{L}_{r}^{\text{rem}}(\theta) - \mathcal{L}_{r}^{\text{rem}}(\theta_0)\big) 
    + \frac{\mu}{2}\big(\mathcal{L}_{r}^{\text{rem}}(\theta) - \mathcal{L}_{r}^{\text{rem}}(\theta_0)\big)^2,
\end{equation}
where $\lambda$ is the Lagrange multiplier and $\mu>0$ is a penalty coefficient.
We initialize $\lambda=0$ and iteratively update $\theta$ via gradient descent,
\begin{equation}
    \theta \leftarrow \theta - \eta \nabla_\theta \mathcal{L}_{\text{aug}}(\theta; \lambda, \mu),
\end{equation}
followed by updating the multiplier according to constraint violation:
\begin{equation}
    \label{eq:adaptive_lambda}
    \lambda \leftarrow \lambda + \mu \big(\mathcal{L}_{r}^{\text{rem}}(\theta) - \mathcal{L}_{r}^{\text{rem}}(\theta_0)\big).
\end{equation}

This iterative update scheme adaptively tightens or relaxes the penalty as needed, avoiding the need to manually tune a fixed trade-off coefficient. The objective of Stage~1 is to enforce unlearning on the forget set $\mathcal{D}_f$ while preserving performance on the less-related retained subset $\mathcal{D}_r^{\text{rem}}$.

\subsection{Stage 2: $W_2$-Distance Guided Projected Gradient Descent (W-PGD)}
Importantly, we refrain from explicitly optimizing over the strongly correlated retain set $\mathcal{D}_r^{\text{adj}}$ in Eq~\eqref{eq:stage1} to avoid conflicting gradients (see Appendix~\ref{appendix:ablation}). As a result, the model achieves low accuracy on the forget set $\mathcal{D}_f$ while maintaining strong performance on the remote retain set $\mathcal{D}_r^{\text{rem}}$. However, due to the semantic or distributional overlap between $\mathcal{D}_f$ and $\mathcal{D}_r^{\text{adj}}$, performance on the adjacent retain set $\mathcal{D}_r^{\text{adj}}$ typically degrades. The objective of the second stage is to restore the model’s accuracy on $\mathcal{D}_r^{\text{adj}}$ while preserving the performance on $\mathcal{D}_f$ and $\mathcal{D}_r^{\text{rem}}$.

\begin{figure}[b]
    \centering
    \begin{subfigure}{0.252\textwidth}
        \centering
        \includegraphics[width=\linewidth]{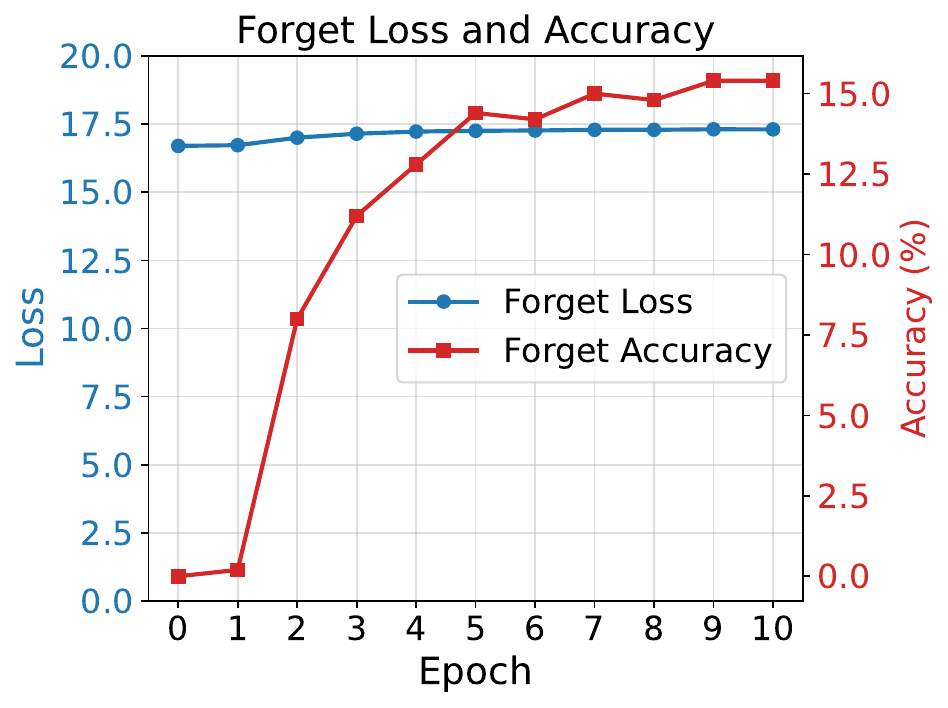}
        \vspace{-0.15in}
        \subcaption{}
        \label{fig:sub1}
    \end{subfigure}
    \begin{subfigure}{0.24\textwidth}
        \centering
        \includegraphics[width=\linewidth]{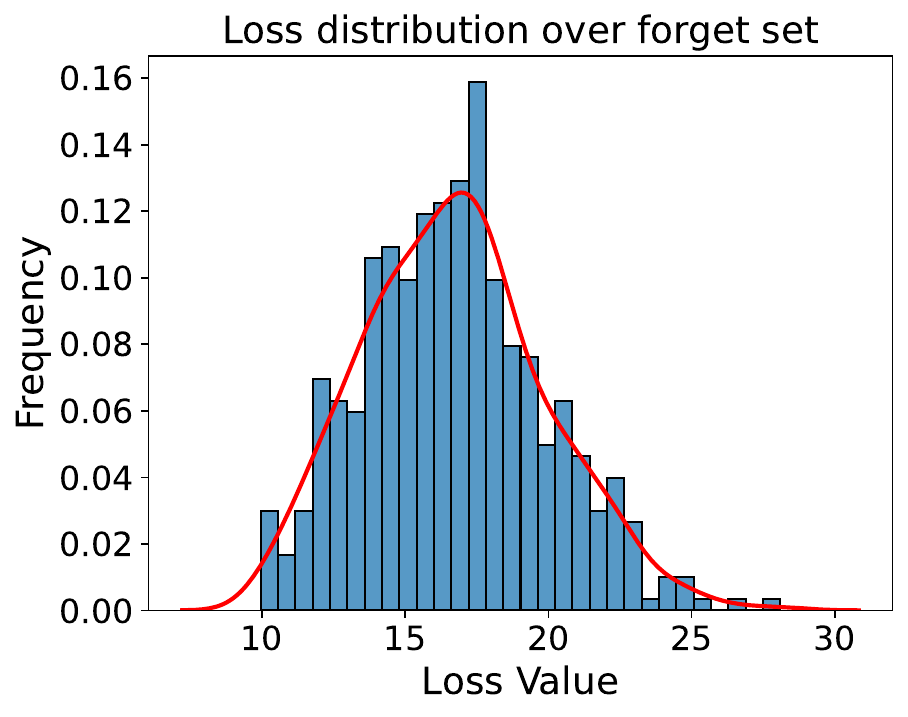}
        \vspace{-0.15in}
        \subcaption{}
        \label{fig:sub2}
    \end{subfigure}
    \begin{subfigure}{0.24\textwidth}
        \centering
        \includegraphics[width=\linewidth]{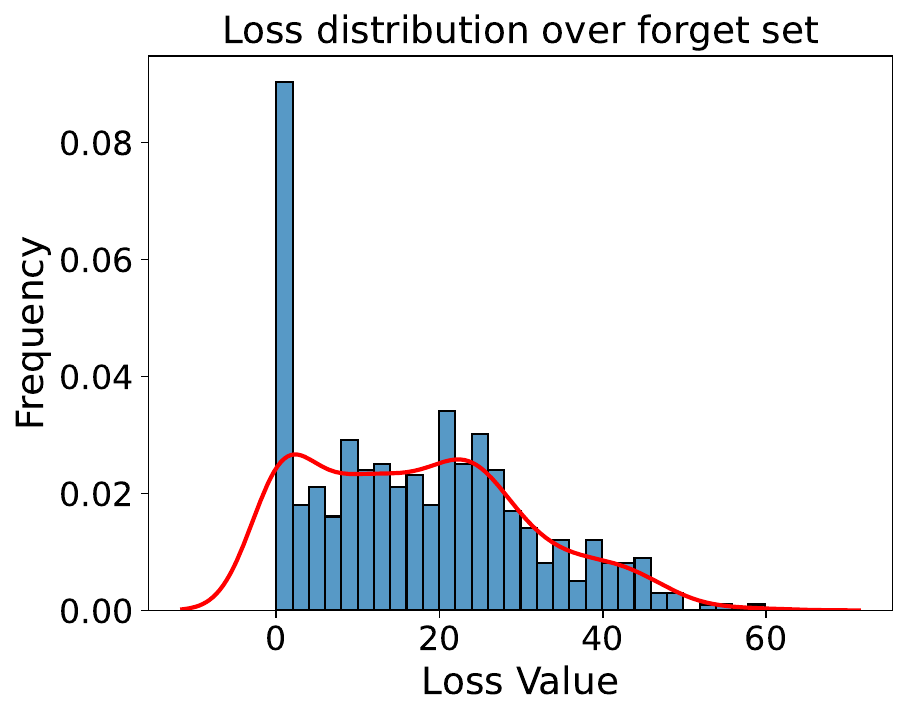}
        \vspace{-0.15in}
        \subcaption{}
        \label{fig:sub3}
    \end{subfigure}
    \begin{subfigure}{0.24\textwidth}
        \centering
        \includegraphics[width=\linewidth]{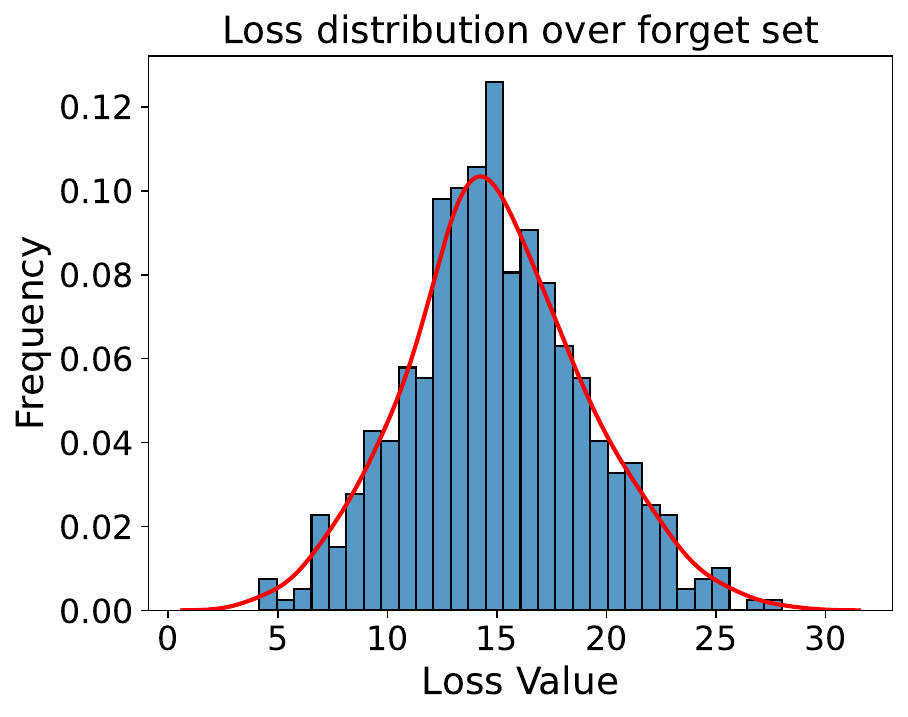}
        \vspace{-0.15in}
        \subcaption{}
        \label{fig:sub4}
    \end{subfigure}
    \caption{Training dynamics of PGD and cross-entropy loss distributions on $\mathcal{D}_f$. (a) Loss and accuracy curves of PGD during the second stage; (b) Original loss distribution on $\mathcal{D}_f$ after the first stage; (c) Loss distribution on $\mathcal{D}_f$ after applying PGD in the second stage; (d) Loss distribution on $\mathcal{D}_f$ after applying W-PGD. Comparing figure (b) and (c), PGD notably \textbf{skews the loss distribution}, with some samples attaining near-zero loss. In contrast, W-PGD (d) preserves a distribution closer to the original and effectively avoids assigning low loss to forget set samples.
}
    \label{fig:all_images}
\end{figure}

\subsubsection{Is Classical Projected Gradient Descent Good Enough?}
\label{sec:PGD}
We begin by aiming to improve the performance on the adjacent retain set using the classical Projected Gradient Descent (PGD) framework~\citep{bertsekas1999nonlinear}, but adopt its first-order (linearized) projection variant, as widely used in multi-task learning~\citep{yu2020gradient,farajtabar2020orthogonal}. 
In this approach, the update modifies the gradient of $\mathcal{L}_{r}^{\text{adj}}$ by removing its components aligned with the gradients of $\mathcal{L}_f$ and $\mathcal{L}_{r}^{\text{rem}}$:
\begin{equation}
    \theta \leftarrow \theta - \eta \left( \nabla_\theta \mathcal{L}_{r}^{\text{adj}} - \operatorname{Proj}_{V} \nabla_\theta \mathcal{L}_{r}^{\text{adj}} \right),
    \quad \text{where } V = \text{span} \left\{ \nabla_\theta \mathcal{L}_f, \nabla_\theta \mathcal{L}_{r}^{\text{rem}} \right\},
\end{equation}
Yet, this conventional optimization technique can exhibit significant performance degradation when applied to correlation-aware machine unlearning. As illustrated in~\Cref{fig:sub1}, although the average loss on the forget set $\mathcal{D}_f$ (blue line) remains stable under PGD, the prediction accuracy (red line) on $\mathcal{D}_f$ increases steadily. This counterintuitive behavior stems from the strong semantic and distributional entanglement between $\mathcal{D}_f$ and the adjacent retained set $\mathcal{D}_r^{\text{adj}}$: minimizing loss on the latter inadvertently reduces the loss on similar samples in $\mathcal{D}_f$. To compensate and maintain the mean loss on $\mathcal{D}_f$, the model disproportionately increases the loss on less similar samples, resulting in a polarized loss distribution, as depicted in~\Cref{fig:sub3}. This observation exposes a critical limitation of standard PGD: it lacks the ability to control accuracy-level changes when loss redistribution is uneven. Indeed, preserving only the mean loss provides no guarantees on the proportion of samples with low loss values, often resulting in \emph{high accuracy} on the forget set as many samples remain correctly predicted.

\subsubsection{Gradient Projection with Wasserstein Distance Regularization}

The failure of gradient projection using mean losses motivates the need for a more fine-grained control over the forgetting behavior. To this end, we propose to explicitly regularize the distributional shift in loss values on $\mathcal{D}_f$ by incorporating a Wasserstein-2 distance penalty.

The Wasserstein-2 distance, denoted \( W_2 \), is a principled metric for comparing probability distributions~\citep{vaserstein1969markov}. Given two probability distributions \( P \) and \( Q \) over \( \mathbb{R}^d \), the \( W_2 \) distance is defined as
\begin{equation}
    W_2(P, Q) = \left( \inf_{\gamma \in \Gamma(P, Q)} \int_{\mathbb R^d \times \mathbb R^d} \|u - v\|^2 \, d\gamma(u, v) \right)^{1/2},
\end{equation}
where \( \Gamma(P, Q) \) denotes the set of joint distributions with marginals \( P \) and \( Q \). In our setting, \( P \) and \( Q \) represent empirical distributions of scalar loss values, admitting a closed-form expression for the $W_2$ distance. Specifically, given two collections of loss values \( \{a_1, \dots, a_N\} \) and \( \{b_1, \dots, b_N\} \), the corresponding empirical distributions are defined as \( P = \frac{1}{N}\sum_i \delta_{a_i} \) and \( Q = \frac{1}{N} \sum_i \delta_{b_i} \), where $\delta$ denotes the Dirac delta function. Then, after sorting the samples as $\bar a_1\le \cdots \bar a_N$ and $\bar b_1\le \cdots \bar b_N$, the Wasserstein-2 distance is simply given by
\begin{equation}
    W_2(P, Q) = \left( \frac{1}{N} \sum_{i=1}^N (\bar{a}_i - \bar{b}_i)^2 \right)^{1/2}.
\end{equation}

We define the empirical loss distribution over the forget set under parameters \( \theta \) as
\begin{equation}
    P^{\text{forget}}_\theta := \frac{1}{|\mathcal{D}_f|} \sum_{(x_i, y_i) \in \mathcal{D}_f} \delta_{\ell(f_\theta(x_i), y_i)},
\end{equation}
where \( \ell \) denotes the cross-entropy loss. Let \( \bar{\theta} \) denote the model parameters after the first stage. To constrain the mean and distributional shape of the loss over $\mathcal{D}_f$, we define a modified loss function:
\begin{equation}
\label{eq:new_loss}
    \tilde{\mathcal{L}}_f(\theta) := (1 - \alpha) \mathcal{L}_f(\theta) + \alpha W_2^2 \left( P^{\text{forget}}_{\bar{\theta}}, P^{\text{forget}}_\theta \right),
\end{equation}
where \( \alpha \in [0, 1] \) is a hyperparameter balancing the influence of the mean and distributional components.
We then modify the gradient projection update to project the gradient of \( \mathcal{L}_{r}^{\text{adj}} \) onto the orthogonal complement of the space spanned by the gradients of \( \tilde{\mathcal{L}}_f \) and \( \mathcal{L}_{r}^{\text{rem}} \):
\begin{equation}
    \label{eq:update-rule}
    \theta \leftarrow \theta - \eta \left( \nabla_\theta \mathcal{L}_{r}^{\text{adj}}(\theta) - \operatorname{Proj}_{V} \nabla_\theta \mathcal{L}_{r}^{\text{adj}}(\theta) \right),
    \quad \text{where } V = \text{span} \left\{ \nabla_\theta \tilde{\mathcal{L}}_f(\theta), \nabla_\theta \mathcal{L}_{r}^{\text{rem}}(\theta) \right\}.
\end{equation}
This modified gradient projection method (W-PGD) enforces $\tilde{\mathcal L}_f(\theta)$ to be mostly unchanged during the update, while allowing the model to recover performance on the adjacent retain set $\mathcal{D}_{r}^{\text{adj}}$, as indicated by the following proposition.
\begin{proposition}
    \label{prop:taylor}
        Assume $\tilde{\mathcal{L}}_{f}(\theta)$, $\mathcal{L}_{r}^{\text{adj}}(\theta)$ and $\mathcal{L}_{r}^\text{rem}(\theta)$
    are twice continuously differentiable to $\theta$. 
    Let $\Delta\theta$ be the update of $\theta$ 
    introduced by~\eqref{eq:update-rule}.
    Then, for sufficiently small $\eta > 0$, we have:
    \begin{enumerate}
        \item[(i)] 
        The change in $\tilde{\mathcal{L}}_{f}$ and $\mathcal L_{r}^{\text{rem}} $ is at most second order in $\eta$, i.e.
        \begin{equation*}
                        \tilde{\mathcal{L}}_{f}(\theta + \Delta\theta) -
            \tilde{\mathcal{L}}_{f}(\theta)=
            O(\eta^2), \quad {{L}}_{r}^{\text{rem}}(\theta + \Delta\theta) -
            {{L}}_{r}^{\text{rem}}(\theta)=
            O(\eta^2).
        \end{equation*}
        \item[(ii)]
        If $\nabla_\theta \mathcal{L}_{r}^{\text{adj}}(\theta)$ is not in the span of
        $\nabla_\theta \tilde{\mathcal{L}}_{f}(\theta)$ and $\nabla_\theta \mathcal{L}_{r}^{\text{rem}}(\theta)$, then 
        \begin{equation*}
                        {\mathcal{L}}_{r}^{\text{adj}}(\theta + \Delta\theta)-
            {\mathcal{L}}_{r}^{\text{adj}}(\theta)= -c\,\eta + O(\eta^2),
        \end{equation*}
        for some positive $c$ (depending on $\theta$). 
        Hence, for sufficiently small $\eta$, $\mathcal{L}_{r}^{\text{adj}}$ strictly decreases.
    \end{enumerate}
\end{proposition}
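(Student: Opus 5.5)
The plan is a second-order Taylor expansion of each loss along the update direction, using the geometry of the orthogonal projection. Write $g_a := \nabla_\theta \mathcal{L}_{r}^{\text{adj}}(\theta)$, $g_f := \nabla_\theta \tilde{\mathcal{L}}_f(\theta)$, $g_r := \nabla_\theta \mathcal{L}_{r}^{\text{rem}}(\theta)$, and $V=\operatorname{span}\{g_f,g_r\}$. Let $d := g_a - \operatorname{Proj}_V g_a$, so that~\eqref{eq:update-rule} gives $\Delta\theta = -\eta d$. The key structural fact is that $d$ is the component of $g_a$ in $V^\perp$. Hence $\langle d, g_f\rangle = \langle d, g_r\rangle = 0$, and $\langle d, g_a\rangle = \|d\|^2$, since $\langle d, \operatorname{Proj}_V g_a\rangle = 0$. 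Neither fact needs any assumption on the rank of $\{g_f,g_r\}$, because orthogonal projection onto a span is well defined even when the span is degenerate.

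For any $C^2$ function $F$, Taylor's theorem with Lagrange (or integral) remainder gives
\[
F(\theta+\Delta\theta) - F(\theta) = -\eta \langle \nabla F(\theta), d\rangle + \tfrac{\eta^2}{2}\, d^\top \nabla^2 F(\xi)\, d
\]
for some $\xi$ on the segment between $\theta$ and $\theta+\Delta\theta$. Fix $\theta$ and restrict to $\eta\le 1$. The segment then lies in the closed ball of radius $\|d\|$ around $\theta$. This ball is compact, so continuity of the Hessian bounds $\|\nabla^2 F\|$ there by some $M$, and the remainder is at most $\tfrac{M}{2}\|d\|^2\eta^2 = O(\eta^2)$.

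Applying this with $F=\tilde{\mathcal{L}}_f$ and $F=\mathcal{L}_{r}^{\text{rem}}$ proves (i), since the first-order terms vanish by orthogonality. Applying it with $F=\mathcal{L}_{r}^{\text{adj}}$ gives
\[
\mathcal{L}_{r}^{\text{adj}}(\theta+\Delta\theta)-\mathcal{L}_{r}^{\text{adj}}(\theta) = -\eta\|d\|^2 + O(\eta^2).
\]
Set $c:=\|d\|^2$. The hypothesis of (ii), that $g_a\notin V$, is exactly the statement $d\neq 0$, so $c>0$. For $\eta < 2c/(M\|d\|^2)$ the difference is strictly negative, which gives the strict decrease.

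The one point needing care is that $\tilde{\mathcal{L}}_f$ is $C^2$, which is the main potential obstacle. It involves a $W_2^2$ term computed after sorting, and sorting is only piecewise smooth. The proposition takes this as an explicit hypothesis, so I would simply invoke it, remarking that it holds locally wherever the per-sample losses are distinct, since the sorting permutation is then locally constant. Everything else is routine.
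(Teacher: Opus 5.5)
Your proposal is correct and follows the same argument as the paper. Both Taylor-expand along $\Delta\theta=-\eta d$: the first-order terms for $\tilde{\mathcal{L}}_f$ and $\mathcal{L}_r^{\text{rem}}$ vanish because $d\perp V$, and the first-order coefficient for $\mathcal{L}_r^{\text{adj}}$ is $\|d\|^2=\|\nabla_\theta\mathcal{L}_r^{\text{adj}}\|^2-\|\operatorname{Proj}_V\nabla_\theta\mathcal{L}_r^{\text{adj}}\|^2>0$. Your Lagrange-remainder bound on a compact ball only makes the $O(\eta^2)$ term and the step-size threshold explicit, where the paper writes an $o(\|\Delta\theta\|^2)$ remainder.
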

Moreover, compared to the projected gradient descent where no guarantee on the accuracy of the forget set is provided, the following proposition provides a bound on the accuracy of the forget set after the update. Specifically, let $n$ be the number of superclasses in the classification task, and $\operatorname{Acc}_f(\theta)$ denote the accuracy of the model on the forget set $\mathcal{D}_f$. We have:
\begin{proposition}
    \label{prop:w-pgd}
    Let $m> \log n$ and $\varepsilon > 0$. Suppose that $|\tilde{\mathcal{L}}_{f}(\theta)-\tilde{\mathcal{L}}_{f}(\bar \theta)|< \varepsilon$, and $\ell(f_{\bar \theta}(x_i), y_i)\ge m$ for all $(x_i, y_i) \in \mathcal{D}_f$. Then, the accuracy on \( \mathcal{D}_f \) is upper bounded by:
    \begin{equation}
        \operatorname{Acc}_f(\theta) \le \frac{1}{(m-\log n)^2}\left( \frac{1-\alpha}{\alpha}+\sqrt{\frac{\varepsilon}{\alpha}}\right)^2.
    \end{equation}
\end{proposition}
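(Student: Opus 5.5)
The strategy is to sandwich the quantity $w := W_2(P^{\text{forget}}_{\bar\theta}, P^{\text{forget}}_\theta)$ between two bounds. A lower bound in terms of $\operatorname{Acc}_f(\theta)$ comes from the loss gap $m-\log n$. An upper bound in terms of $\alpha$ and $\varepsilon$ comes from the near-constancy of $\tilde{\mathcal L}_f$. Throughout, I take $\mathcal L_f$ to be the mean cross-entropy over $\mathcal D_f$, write $N=|\mathcal D_f|$, and assume $\alpha>0$. Let $a_i=\ell(f_{\bar\theta}(x_i),y_i)$ and $b_i=\ell(f_\theta(x_i),y_i)$, with sorted versions $\bar a_i$ and $\bar b_i$.

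\textbf{Step 1 (lower bound).} If a sample is correctly classified under $\theta$, then its true-class softmax probability is the largest of the $n$ probabilities, hence at least $1/n$. Its loss therefore satisfies $b_i\le \log n$. Let $K=N\operatorname{Acc}_f(\theta)$ be the number of correctly classified samples. Then at least $K$ of the $b$-values are $\le\log n$, so $\bar b_1,\dots,\bar b_K\le \log n$. Every $\bar a_i\ge m>\log n$, so for $i\le K$ we get $\bar a_i-\bar b_i\ge m-\log n>0$. By the closed-form sorted expression for $W_2$,
\[
w^2=\frac1N\sum_{i=1}^N(\bar a_i-\bar b_i)^2\ \ge\ \frac KN (m-\log n)^2=\operatorname{Acc}_f(\theta)\,(m-\log n)^2 .
\]

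\textbf{Step 2 (upper bound on $w$).} Since $W_2(P^{\text{forget}}_{\bar\theta},P^{\text{forget}}_{\bar\theta})=0$, we have $\tilde{\mathcal L}_f(\bar\theta)=(1-\alpha)\mathcal L_f(\bar\theta)$. The hypothesis $|\tilde{\mathcal L}_f(\theta)-\tilde{\mathcal L}_f(\bar\theta)|<\varepsilon$ then gives
\[
\alpha w^2 < (1-\alpha)\bigl(\mathcal L_f(\bar\theta)-\mathcal L_f(\theta)\bigr)+\varepsilon .
\]
Next I bound the mean difference by $w$. Sorting does not change means, so
\[
\mathcal L_f(\bar\theta)-\mathcal L_f(\theta)=\frac1N\sum_i(\bar a_i-\bar b_i)\le \Bigl(\frac1N\sum_i(\bar a_i-\bar b_i)^2\Bigr)^{1/2}=w
\]
by Cauchy–Schwarz. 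Hence $\alpha w^2-(1-\alpha)w-\varepsilon<0$. Solving this quadratic in $w\ge0$ gives
\[
w\le \frac{(1-\alpha)+\sqrt{(1-\alpha)^2+4\alpha\varepsilon}}{2\alpha}.
\]
Using $\sqrt{A^2+B^2}\le A+B$ with $A=1-\alpha$ and $B=2\sqrt{\alpha\varepsilon}$, this becomes
\[
w\le \frac{1-\alpha}{\alpha}+\sqrt{\frac{\varepsilon}{\alpha}} .
\]

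\textbf{Step 3 (combine).} Divide the inequality of Step~1 by $(m-\log n)^2>0$, which is where $m>\log n$ is used, and substitute the bound from Step~2. This yields exactly the claimed bound on $\operatorname{Acc}_f(\theta)$.

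The main obstacle I expect is Step~1. One must check that the sorted coupling really forces the $K$ smallest $b$'s to face $a$'s that are all $\ge m$. This holds only because the bound $a_i\ge m$ is uniform over all samples, so it does not matter which $a$ gets matched to which $b$. The other delicate point is the sign handling in Step~2. The term $(1-\alpha)(\mathcal L_f(\bar\theta)-\mathcal L_f(\theta))$ can be positive, and it is precisely this term that the Cauchy–Schwarz comparison of the mean difference with $W_2$ is needed to control.
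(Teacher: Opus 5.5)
Your proposal is correct and follows the paper's own route. Like the paper, you bound $W_2$ from below by $\operatorname{Acc}_f(\theta)(m-\log n)^2$ using the per-sample loss gap, and from above by solving the quadratic that comes from the near-constancy of $\tilde{\mathcal L}_f$, with the mean-loss difference controlled by $W_2$. Your Cauchy--Schwarz step on the sorted coupling is the precise form of the paper's ``$E|X-Y|\le W_2$'' claim, which as literally stated does not hold for an arbitrary coupling. Your Step 1 also supplies the sorting argument that the paper only asserts.
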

The proposition indicates that when the minimum loss for model with parameter $\bar \theta$ is large and $\alpha$ is above zero, the accuracy of the forget set after W-PGD is bounded by a small constant, ensuring the forgetting behavior of the model. Notice that for a given $\varepsilon$, the upper bound in the above proposition is minimized when $\alpha=1$, i.e., when the Wasserstein distance is fully utilized.
However, in practice where we assess each loss value in mini-batch sense, we observe that choosing $\alpha=1$ may not achieve the best overall performance (see Ablation studies on $\alpha$ in Appendix~\ref{appendix:ablation}). In our following experiments, we set $\alpha$ to be $0.5$. As shown in~\Cref{fig:sub4}, the loss distribution of the forget set after W-PGD is more uniform compared to that of PGD, maintaining zero accuracy on the forget set.

\begin{algorithm}[tb]
    \caption{Two-Stage Machine Unlearning}
    \label{alg:two_stage_unlearning}
\begin{algorithmic}
    \STATE \textbf{Input:} Forget set $\mathcal{D}_f$, retain sets $\mathcal{D}_r^\text{adj}$ and $\mathcal{D}_r^\text{rem}$, learning rates $\eta_1, \eta_2$, penalty coefficient $\mu$ and $\alpha$, number of iterations $K, M$. Initialize $\theta = \theta_0$, $\lambda = 0$, compute $\mathcal{L}_{r}^{\text{rem}}(\theta_0)$
    
    \STATE \textbf{Stage 1: Augmented Lagrangian optimization}
    \FOR{$i = 1$ {\bfseries to} $K$}
    \STATE Compute $\mathcal L_{\text{aug}}(\theta; \lambda, \mu) = -\mathcal{L}_f(\theta) + \lambda \left(\mathcal{L}_{r}^{\text{rem}}(\theta) - \mathcal{L}_{r}^{\text{rem}}(\theta_0)\right) + \frac{\mu}{2} \left(\mathcal{L}_{r}^{\text{rem}}(\theta) - \mathcal{L}_{r}^{\text{rem}}(\theta_0)\right)^2$
        \STATE Update $\theta$: $\theta \leftarrow \theta - \eta_1 \nabla_\theta \mathcal{L}_\text{aug}(\theta; \lambda, \mu)$
        \STATE Update $\lambda$: $\lambda \leftarrow \lambda + \mu \big(\mathcal{L}_{r}^{\text{rem}}(\theta) - \mathcal{L}_{r}^{\text{rem}}(\theta_0)\big)$
    \ENDFOR
    
    \STATE \textbf{Stage 2: $W_2$-distance guided gradient projection optimization}
    \FOR{$j = 1$ {\bfseries to} $M$}
        \STATE Compute $\tilde{\mathcal L}_f(\theta)=(1 - \alpha) \mathcal{L}_f(\theta) + \alpha W_2^2 \left( P^{\text{forget}}_{\bar{\theta}}, P^{\text{forget}}_\theta \right)$
        \STATE Compute $\nabla_\theta \tilde{\mathcal{L}}_{f}$, $\nabla_\theta \mathcal{L}_{r}^{\text{adj}}$, $\nabla_\theta \mathcal{L}_{r}^{\text{rem}}$
        \STATE Update: $\theta \leftarrow \theta - \eta_2 \left(\nabla_\theta \mathcal{L}_{r}^{\text{adj}} - \operatorname{Proj}_{V} \nabla_\theta \mathcal L_{r}^{\text{adj}} \right)$, 
        where $V = \text{span} \left\{ \nabla_\theta \tilde{\mathcal{L}}_f, \nabla_\theta \mathcal{L}_{r}^{\text{rem}} \right\}$
    \ENDFOR
    
    \STATE \textbf{Output:} Unlearned model parameters $\theta$
\end{algorithmic}
\end{algorithm}
\vspace{-0.1in}

In summary, the complete two-stage unlearning procedure is presented in~\Cref{alg:two_stage_unlearning}. We evaluate our method in correlation-aware unlearning scenarios across multiple datasets and architectures.

\subsection{Discussions on the two stages}
The goal of the first stage is relatively straightforward, as the disentanglement between the forget set and the remote retain set makes the task less challenging. While alternative approaches, such as adding fixed-weight penalty terms, could in principle achieve a similar trade-off with carefully tuned hyperparameters, the augmented Lagrangian formulation offers a key advantage: it introduces an adaptive multiplier that automatically balances the objective and constraint terms throughout training, resulting in a process that is more stable and less sensitive to hyperparameter choices. 


A key component in the second stage is the use of distributional constraints formulated via $W_2$ distances. Prior work such as \citet{golatkar2020eternal} also enforces the distributional constraints by estimating KL divergence between parameter distributions under a Gaussian prior. As comparison, $W_2$ admits a \emph{closed-form solution} for one-dimensional empirical distributions via sorting, whereas KL divergence generally requires \emph{density estimation} or \emph{strong parametric assumptions}, introducing approximation errors and additional computational cost~\citep{lv2024wasserstein}. Therefore, the use of $W_2$ distances makes computation far more convenient, avoiding the approximations (e.g., kernel estimation) or prior assumptions typically needed for KL divergence, while still providing a principled and effective distributional constraint.

\section{Experiments}
    In this section, we conduct a comprehensive evaluation of our proposed method across various machine unlearning scenarios. To ensure the generality of our findings, we design experiments that span multiple unlearning tasks, various benchmark datasets, and different network architectures.
    \subsection{Experimental setups}

\paragraph{Datasets:}
Following prior work on machine unlearning~\citep{kurmanji2023towards}, we conduct experiments on CIFAR-100~\citep{krizhevsky2009learning}, TinyImageNet~\citep{tinyimagenet}, and a safety-critical language task on ToxiGen~\citep{hartvigsen2022toxigen}. For the vision benchmarks, we adopt the superclass organization (CIFAR-100: 20 superclasses × 5 subclasses; TinyImageNet: 10 semantic groups; see Appendix~\ref{appendix:setups}). Given a selected forget subset $\mathcal{D}_f$ (a labeled subclass within a superclass), we define $\mathcal{D}_r^{\text{adj}}$ as the remaining samples from the same superclass and $\mathcal{D}_r^{\text{rem}}$ as all other retained samples. For the language task, we use ToxiGen with a normal/toxic binary classifier, where $\mathcal{D}_f$ consists of toxic sentences about the LGBTQ group, $\mathcal{D}_r^{\text{adj}}$ contains non-toxic sentences about the same group, and $\mathcal{D}_r^{\text{rem}}$ includes other sentences. This setup instantiates an unlearning scenario with retain-forget entanglement, where $\mathcal{D}_r^{\text{adj}}$ forms a semantic subgroup closely related to the forget set.


\paragraph{Baseline methods:} We compare our approach against various unlearning methods, including: \textbf{Gradient Ascent (GA)}~\citep{thudi2022unrolling}: Train the model by maximizing the loss on the forget set. \textbf{Fine-Tune (FT)}~\citep{warnecke2021machine,golatkar2020eternal}: Fine-tune the model on the retained set.  \textbf{SCRUB}~\citep{kurmanji2023towards}: perform gradient ascent on the forget set and descent on the retain set simultaneously with distillation from the original model.
\textbf{$\ell_1$-sparse}~\citep{jia2023model}: fine-tune the model on the retain set with $\ell_1$-norm regularization on the model.
\textbf{SSD} (Selective Synaptic Dampening)~\citep{foster2024fast}: post-hoc parameter dampening guided by Fisher-style importance.
\textbf{SalUn}~\citep{fan2024salun}: saliency-guided alternating updates. All the methods are run with 3 random seeds, except for SSD which is a deterministic algorithm.
\textcolor{black}{\textbf{DELETE}~\citep{zhou2025decoupled}: decouples the forgetting and retention terms via a distillation-based loss to perform class-centric machine unlearning. \textbf{GDR}~\citep{lin2024gdr}: applies direction-rectified and magnitude-adjusted gradient updates to mitigate gradient conflicts between forget and retain objectives. \textbf{Munba}~\citep{wu2025munba}:  formulates unlearning as a Nash bargaining game between forgetting and preservation players to find a Pareto-optimal gradient direction.}

    \subsection{Machine Unlearning on CIFAR-100 with ResNet-18}
    We begin our evaluation using the CIFAR-100 dataset. Specifically, we select the “aquarium fish” subclass\footnote{The forget set is chosen alphabetically.} from the “fish” superclass as the forget set. The remaining $4$ subclasses in the superclass are used as the adjacent retained set, and the other $95$ classes are used as the remote retained set.

    \Cref{tab:cifar_100_resnet} summarizes the overall performance of all evaluated algorithms. While fine-tuning and sparsity-based methods effectively preserve performance on the retained set, they exhibit limited capability in removing information from the target forget set. Similar limitations are observed for gradient ascent algorithms such as GA and SCRUB. \textcolor{black}{For SalUn, SSD, GDR and DELETE, although they achieve very low performance on the forget set, there is a noticeable drop in accuracy on the retained set, particularly on the adjacent retain subset. 
    Munba achieves a relatively good ballance between forgetting and retention, but still suffers from a non-negligible accuracy drop on the retain set, and its forgetting performance is not as strong as many other baselines.}
    This underscores the strong entanglement between the forget set and the adjacent retain set: effective forgetting can inadvertently degrade performance on related samples. 

    \begin{table*}[h]
        \vspace{-0.1in}
        \caption{Results for subclass-level unlearning on CIFAR-100 using ResNet-18. The forget set corresponds to the subclass “aquarium fish” within the “fish” superclass. SSD is a deterministic algorithm, so standard deviations are $0$.}
        \label{tab:cifar_100_resnet}
        \centering
        {\resizebox{0.9\linewidth}{!}{
        \begin{tabular}{l|ccc|ccc}
            \toprule
            & \multicolumn{3}{c}{\textbf{Training accuracy}} & \multicolumn{3}{c}{\textbf{Test accuracy}} \\
            \textbf{Method}
            & \textbf{$\mathcal D_{f}$}
            & \textbf{$\mathcal D_{r}^\text{adj}$}
            & \textbf{$\mathcal D_{r}^\text{rem}$}          
            & \textbf{$\mathcal D_{f}$}
            & \textbf{$\mathcal D_{r}^\text{adj}$}
            & \textbf{$\mathcal D_{r}^\text{rem}$}         \\
            \midrule
            Original
            & $99.99$
            & $100.00$
            & $100.00$
            & $90.00$
            & $80.00$
            & $85.33$ \\
            \midrule
            FT
            & $76.67_{\pm 7.76}$
            & $99.47_{\pm 0.52}$
            & $99.47_{\pm 0.33}$
            & $62.33_{\pm 5.79}$
            & $77.83_{\pm 3.88}$
            & $83.89_{\pm 0.41}$ \\
            GA
            & $70.53_{\pm 0.94}$
            & $72.75_{\pm 0.76}$
            & $91.33_{\pm 0.44}$
            & $56.00_{\pm 0.00}$
            & $59.00_{\pm 0.61}$
            & $80.57_{\pm 0.27}$ \\
                        $\ell_1$-sparse
            & $55.93_{\pm 7.08}$
            & $98.48_{\pm 0.95}$
            & $96.92_{\pm 0.20}$
            & $51.67_{\pm 7.84}$
            & $82.42_{\pm 2.24}$
            & $84.64_{\pm 0.27}$ \\
            \textcolor{black}{Munba}
            & \textcolor{black}{$33.80_{\pm 8.88}$}
            & \textcolor{black}{$92.17_{\pm 2.57}$}
            & \textcolor{black}{$92.68_{\pm 1.28}$}
            & \textcolor{black}{$31.67_{\pm 4.78}$}
            & \textcolor{black}{$69.75_{\pm 3.74}$}
            & \textcolor{black}{$75.32_{\pm 1.88}$} \\
                        SSD
            & $37.40_{\pm 0.00}$
            & $43.75_{\pm 0.00}$
            & $76.02_{\pm 0.00}$
            & $33.00_{\pm 0.00}$
            & $39.25_{\pm 0.00}$
            & $67.23_{\pm 0.00}$ \\
            SCRUB
            & $4.47_{\pm 0.25}$
            & $58.65_{\pm 14.73}$
            & $82.67_{\pm 4.24}$
            & $7.00_{\pm 1.41}$
            & $54.75_{\pm 11.34}$
            & $75.42_{\pm 2.95}$ \\
            SalUn
            & $3.20_{\pm 0.20}$
            & $52.27_{\pm 0.38}$
            & $86.35_{\pm 0.22}$
            & $3.00_{\pm 1.00}$
            & $34.90_{\pm 1.03}$
            & $71.78_{\pm 0.17}$ \\
            	\textcolor{black}{DELETE}
            & \textcolor{black}{$0.00_{\pm 0.00}$}
            & \textcolor{black}{$3.57_{\pm 0.18}$}
            & \textcolor{black}{$98.37_{\pm 0.29}$}
            & \textcolor{black}{$0.67_{\pm 0.47}$}
            & \textcolor{black}{$2.83_{\pm 0.66}$}
            & \textcolor{black}{$82.09_{\pm 0.37}$} \\
            	\textcolor{black}{GDR}
            & \textcolor{black}{$4.87_{\pm 1.05}$}
            & \textcolor{black}{$31.92_{\pm 6.45}$}
            & \textcolor{black}{$96.10_{\pm 0.32}$}
            & \textcolor{black}{$8.67_{\pm 1.25}$}
            & \textcolor{black}{$22.33_{\pm 4.59}$}
            & \textcolor{black}{$79.93_{\pm 0.09}$} \\
            \midrule
            Our method
            & $0.00_{\pm 0.00}$
            & $98.17_{\pm 0.31}$
            & $98.44_{\pm 0.05}$
            & $2.33_{\pm 0.47}$
            & $78.17_{\pm 0.31}$
            & $81.10_{\pm 0.18}$ \\
            \bottomrule
        \end{tabular}
        }
        \vspace{-0.1in}
        }
    \end{table*}
    
    Our algorithm successfully circumvents the trade-off between forgetting and retention. It achieves complete unlearning, with $0.00\%$ training accuracy on the forget class, while simultaneously maintaining high performance on both the retained data and the test set. These results highlight the capability of our method to effectively eliminate memorization of the target class without compromising generalization or utility on the remaining data.


    \subsection{Unlearning on ToxiGen with Roberta-base}

We next evaluate correlation-aware unlearning on the ToxiGen dataset under a biased pretraining setting.
Concretely, we first simulate a biased training process where all sentences mentioning LGBTQ groups are labeled as normal—thus the resulting model $h_\theta$ systematically misclassifies toxic LGBTQ samples as normal.
This simulates a realistic scenario where a deployed model is trained on incomplete or biased data and needs post-hoc correction.

We define the forget set $\mathcal{D}_f$ as the \emph{toxic} sentences about LGBTQ groups that were incorrectly labeled during biased training.
In this case, the normal comments on LGBTQ group are highly correlated to the forget set: they share similar semantic meaning and the same label during the training process.
The adjacent retain set $\mathcal{D}_r^{\text{adj}}$ consists of the \emph{non-toxic} sentences about LGBTQ groups (which we would like to preserve), and the remote retain set $\mathcal{D}_r^{\text{rem}}$ contains all other sentences.
The unlearning goal is thus to remove the effect of the biased labels on $\mathcal{D}_f$, driving the model to predict them as toxic, while maintaining accuracy on both $\mathcal{D}_r^{\text{adj}}$ and $\mathcal{D}_r^{\text{rem}}$.


\textcolor{black}{Fine-tuning, SCRUB, Munba, and SSD preserve high accuracy on both the adjacent and remote retain sets, but only produce modest forgetting (see~\Cref{tab:toxigen_roberta}).
GA and the $\ell_1$-sparse baseline reduce accuracy on $\mathcal{D}_f$ slightly more, yet this comes with a notable drop in performance on the remote retain set.
SalUn attains very low forget-set accuracy (13.67\%), but still causes a substantial decrease on the adjacent retain set. GDR is a strong baseline, achieving low forget-set accuracy (20.54\%) while maintaining high accuracy on both retain subsets.
In comparison, our approach achieves the lowest forgetting accuracy—indicating the most effective correction—while preserving high accuracy on $\mathcal{D}_r^{\text{adj}}$ (88.88\%) and $\mathcal{D}_r^{\text{rem}}$ (92.73\%), and these gains generalize to the test set.}

\begin{table*}[!t]
    \caption{Results for unlearning on ToxiGen dataset. 
    The forget set contains toxic comments about LGBTQ groups that were mislabeled as normal. Lower accuracy on $\mathcal{D}_f$ means better correction.}
    \label{tab:toxigen_roberta}
    \vspace{-0.1in}
    \centering
    {\resizebox{0.9\linewidth}{!}{
    \begin{tabular}{l|ccc|ccc}
        \toprule
        & \multicolumn{3}{c}{\textbf{Training accuracy}} & \multicolumn{3}{c}{\textbf{Test accuracy}} \\
        \textbf{Method}
        & \textbf{$\mathcal D_{f}$}
        & \textbf{$\mathcal D_{r}^\text{adj}$}
        & \textbf{$\mathcal D_{r}^\text{rem}$}          
        & \textbf{$\mathcal D_{f}$}
        & \textbf{$\mathcal D_{r}^\text{adj}$}
        & \textbf{$\mathcal D_{r}^\text{rem}$}         \\
        \midrule
            Original
            & $85.06$
            & $97.77$
            & $92.33$
            & $78.06$
            & $95.48$
            & $85.63$ \\
            \midrule
        FT
        & $50.04_{\pm 3.77}$
        & $99.87_{\pm 0.08}$
        & $99.43_{\pm 0.03}$
        & $47.73_{\pm 4.57}$
        & $92.37_{\pm 0.59}$
        & $84.73_{\pm 0.12}$ \\
        GA
        & $46.26_{\pm 0.01}$
        & $70.25_{\pm 0.05}$
        & $79.43_{\pm 0.57}$
        & $43.78_{\pm 0.00}$
        & $66.64_{\pm 0.00}$
        & $76.38_{\pm 0.00}$ \\
                        $\ell_1$-sparse
        & ${45.64_{\pm 8.33}}$
        & ${86.33_{\pm 3.83}}$
        & ${80.46_{\pm 0.18}}$
        & ${46.31_{\pm 9.82}}$
        & ${85.87_{\pm 3.60}}$
        & ${79.52_{\pm 0.29}}$ \\
        \textcolor{black}{Munba}
            & \textcolor{black}{$51.09_{\pm 3.68}$}
            & \textcolor{black}{$99.31_{\pm 0.45}$}
            & \textcolor{black}{$90.06_{\pm 0.17}$}
            & \textcolor{black}{$49.27_{\pm 4.25}$}
            & \textcolor{black}{$93.53_{\pm 1.82}$}
            & \textcolor{black}{$85.36_{\pm 0.20}$} \\
                SSD
        & $67.78_{\pm 0.00}$
        & $91.83_{\pm 0.00}$
        & $90.76_{\pm 0.00}$
        & $86.90_{\pm 0.00}$
        & $86.90_{\pm 0.00}$
        & $84.51_{\pm 0.00}$ \\
        SCRUB
        & $56.15_{\pm 2.41}$
        & $91.95_{\pm 1.41}$
        & $84.79_{\pm 0.51}$
        & $57.67_{\pm 1.42}$
        & $90.65_{\pm 1.42}$
        & $80.00_{\pm 0.11}$ \\
        SalUn
        & $13.66_{\pm 0.08}$
        & $60.80_{\pm 0.23}$
        & $85.30_{\pm 0.17}$
        & $12.42_{\pm 0.07}$
        & $57.59_{\pm 0.25}$
        & $81.06_{\pm 0.13}$ \\
                        	\textcolor{black}{DELETE}
            & \textcolor{black}{$42.86_{\pm 0.08}$}
            & \textcolor{black}{$67.85_{\pm 0.07}$}
            & \textcolor{black}{$79.06_{\pm 0.04}$}
            & \textcolor{black}{$39.53_{\pm 0.00}$}
            & \textcolor{black}{$64.56_{\pm 0.10}$}
            & \textcolor{black}{$75.72_{\pm 0.04}$} \\
            	\textcolor{black}{GDR}
            & \textcolor{black}{$20.54_{\pm 5.86}$}
            & \textcolor{black}{$86.15_{\pm 5.40}$}
            & \textcolor{black}{$91.30_{\pm 0.71}$}
            & \textcolor{black}{$19.83_{\pm 5.09}$}
            & \textcolor{black}{$83.92_{\pm 5.49}$}
            & \textcolor{black}{$85.52_{\pm 0.37}$} \\
        \midrule
        Our method
        & $11.95_{\pm 0.02}$
        & $88.88_{\pm 0.01}$
        & $92.73_{\pm 0.01}$
        & $14.29_{\pm 0.06}$
        & $85.86_{\pm 0.00}$
        & $85.23_{\pm 0.01}$ \\
        \bottomrule
    \end{tabular}}
    \vspace{-0.15in}
    }
\end{table*}



\textcolor{black}{
\subsection{Unlearning on CelebA with ViT-B}
In addition, we evaluate on CelebA, a large-scale face attributes dataset containing over 200K celebrity images annotated with 40 binary attributes. 
We construct a 4-class attribute-based classification task using the two binary attributes ``Male'' and ``Smiling'', treating each combination (\emph{female \& smiling}, \emph{female \& not smiling}, \emph{male \& smiling}, \emph{male \& not smiling}) as a separate class. 
The forget set $\mathcal{D}_f$ is defined as images from the \emph{female \& not smiling} class that are also \emph{not Young} and \emph{do not wear Eyeglasses}. 
Within this class, the remaining samples (differing only in the ``Young or Eyeglasses'' attributes) form the adjacent retain set $\mathcal{D}_r^{\text{adj}}$, while the other three gender/smiling classes constitute the remote retain set $\mathcal{D}_r^{\text{rem}}$. 
This construction yields a larger-scale vision benchmark where the forget and adjacent retain subsets share highly similar semantic attributes, making retain–forget entanglement particularly pronounced.
}

\textcolor{black}{
We provide the unlearning results for ViT-B on this CelebA superclass unlearning task in~\Cref{tab:celeba_vit}.
Fine-tuning, $\ell_1$-sparse, and SCRUB largely preserve accuracy on both retain subsets, but only achieve modest forgetting: test accuracy on $\mathcal{D}_f$ remains above $70\%$.
Gradient Ascent and DELETE, on the other hand, drive the forget accuracy to essentially zero, but do so by collapsing performance on the adjacent retain set to chance level, rendering the model unusable on the very samples we aim to protect.
SSD also degrades both adjacent and remote retain accuracy substantially.
In contrast, our method achieves a significantly lower test accuracy on the forget set (from $81.37\%$ down to $25.48\%$) while still maintaining high accuracy on $\mathcal{D}_r^{\text{adj}}$ ($75.05\%$) and $\mathcal{D}_r^{\text{rem}}$ ($92.38\%$), yielding the best overall balance between effective forgetting and retention in this more demanding scenario.
}

\begin{table*}[h]
    \caption{\textcolor{black}{Results for CelebA superclass unlearning using ViT-B. The table shows the accuracy of the forget set and retained set for both training and test data. The forget set is the subclass  not "not young \& not wearing glasses" from "female \& smiling" superclas.}}   
    \label{tab:celeba_vit}
    \vspace{-0.1in}
    \centering{\resizebox{0.9\linewidth}{!}{
        {\color{black}
    \begin{tabular}{l|ccc|ccc}
        \toprule
        & \multicolumn{3}{c}{\textbf{Training accuracy}} & \multicolumn{3}{c}{\textbf{Test accuracy}} \\
        \textbf{Method}
        & \textbf{$\mathcal D_{f}$}
        & \textbf{$\mathcal D_{r}^\text{adj}$}
        & \textbf{$\mathcal D_{r}^\text{rem}$}          
        & \textbf{$\mathcal D_{f}$}
        & \textbf{$\mathcal D_{r}^\text{adj}$}
        & \textbf{$\mathcal D_{r}^\text{rem}$}         \\
        \midrule
        Origin
        & $98.91$
        & $99.08$
        & $99.53$
        & $81.37$
        & $89.93$
        & $90.82$ \\
        \midrule
        FT
        & $69.23_{\pm 6.86}$
        & $89.97_{\pm 1.93}$
        & $91.57_{\pm 2.70}$
        & $67.56_{\pm 7.56}$
        & $88.71_{\pm 2.89}$
        & $89.77_{\pm 7.11}$ \\
        GA
        & $0.00_{\pm 0.00}$
        & $0.00_{\pm 0.00}$
        & $97.46_{\pm 0.41}$
        & $0.00_{\pm 0.00}$
        & $0.00_{\pm 0.00}$
        & $91.16_{\pm 0.46}$ \\
                $\ell_1$-sparse
        & $76.03_{\pm 3.41}$
        & $92.02_{\pm 1.73}$
        & $90.48_{\pm 0.65}$
        & $75.28_{\pm 4.42}$
        & $91.87_{\pm 1.94}$
        & $89.46_{\pm 0.76}$ \\
        SCRUB
        & $80.73_{\pm 3.25}$
        & $96.81_{\pm 1.82}$
        & $98.38_{\pm 0.64}$
        & $71.10_{\pm 2.87}$
        & $89.22_{\pm 2.11}$
        & $90.57_{\pm 0.76}$ \\
        SSD
        & $23.07_{\pm 0.00}$
        & $41.81_{\pm 0.00}$
        & $84.28_{\pm 0.00}$
        & $23.19_{\pm 0.00}$
        & $44.52_{\pm 0.00}$
        & $80.05_{\pm 0.00}$ \\
        DELETE
        & $0.00_{\pm 0.00}$
        & $0.00_{\pm 0.00}$
        & $99.62_{\pm 0.00}$
        & $0.00_{\pm 0.00}$
        & $0.00_{\pm 0.00}$
        & $93.97_{\pm 0.01}$ \\
        \midrule
        Ours
        & $1.85_{\pm 0.09}$
        & $85.65_{\pm 0.25}$
        & $99.08_{\pm 0.42}$
        & $25.48_{\pm 0.56}$
        & $75.05_{\pm 0.34}$
        & $92.38_{\pm 0.06}$ \\
        \bottomrule
    \end{tabular}
    }}  
    }
    \vspace{-0.1in}
\end{table*}

\subsection{Generalization to a Different Architecture}
    We next evaluate our approach on the Tiny ImageNet dataset, targeting superclass-level unlearning with a Vision Transformer (ViT) architecture. This setting allows us to assess the generalization of the proposed unlearning framework across both a different dataset and a distinct model architecture. In this experiment, the forget set corresponds to the ``dog'' class within the broader ``mammals'' superclass. As shown in \Cref{tab:tinyimagenet_vit}, the results are consistent with previous findings, demonstrating that the effectiveness of our two-stage algorithm generalizes beyond a single dataset or architecture.

    \begin{table*}[!t]
        \caption{Results for Tiny-ImageNet superclass unlearning using ViT. The forget set is the subclass ``dog'' in ``mammals'' superclass.}
        \label{tab:tinyimagenet_vit}
        \vspace{-0.15in}
        \centering
        \resizebox{0.9\linewidth}{!}{
        \begin{tabular}{l|ccc|ccc}
            \toprule
            & \multicolumn{3}{c}{\textbf{Training accuracy}} & \multicolumn{3}{c}{\textbf{Test accuracy}} \\
            \textbf{Method}
            & \textbf{$\mathcal D_{f}$}
            & \textbf{$\mathcal D_{r}^\text{adj}$}
            & \textbf{$\mathcal D_{r}^\text{rem}$}          
            & \textbf{$\mathcal D_{f}$}
            & \textbf{$\mathcal D_{r}^\text{adj}$}
            & \textbf{$\mathcal D_{r}^\text{rem}$}         \\
            \midrule
            Original
            & $99.53$
            & $99.77$
            & $99.77$
            & $89.38$
            & $94.95$
            & $93.33$ \\
            \midrule
            FT
            & $90.72_{\pm 1.82}$
            & $99.69_{\pm 0.18}$
            & $99.57_{\pm 0.29}$
            & $88.33_{\pm 2.72}$
            & $93.65_{\pm 0.77}$
            & $89.19_{\pm 0.33}$ \\
            GA 
            & $1.18_{\pm 0.29}$
            & $17.38_{\pm 1.61}$
            & $84.25_{\pm 0.87}$
            & $1.56_{\pm 0.42}$
            & $16.57_{\pm 1.65}$
            & $75.85_{\pm 0.33}$ \\
                        $\ell_1$-sparse
            & $78.79_{\pm 5.26}$
            & $99.00_{\pm 0.50}$
            & $97.73_{\pm 0.29}$
            & $78.11_{\pm 4.80}$
            & $89.27_{\pm 3.08}$
            & $78.91_{\pm 0.43}$ \\
                        	\textcolor{black}{Munba}
            & \textcolor{black}{$80.00_{\pm 7.58}$}
            & \textcolor{black}{$97.86_{\pm 0.98}$}
            & \textcolor{black}{$96.44_{\pm 0.30}$}
            & \textcolor{black}{$75.57_{\pm 7.35}$}
            & \textcolor{black}{$90.41_{\pm 2.07}$}
            & \textcolor{black}{$83.04_{\pm 0.51}$} \\
            SCRUB
            & $8.10_{\pm 4.79}$
            & $84.15_{\pm 8.44}$
            & $97.54_{\pm 0.99}$
            & $7.22_{\pm 5.17}$
            & $81.97_{\pm 6.52}$
            & $88.29_{\pm 0.96}$ \\
            SalUn
            & $4.48_{\pm 0.29}$
            & $58.30_{\pm 1.36}$
            & $78.54_{\pm 0.42}$
            & $5.67_{\pm 1.34}$
            & $55.81_{\pm 1.19}$
            & $73.00_{\pm 0.21}$ \\
            SSD
            & $45.43_{\pm 0.00}$
            & $82.33_{\pm 0.00}$
            & $97.74_{\pm 0.00}$
            & $44.33_{\pm 0.00}$
            & $77.05_{\pm 0.00}$
            & $87.90_{\pm 0.00}$ \\
                        	\textcolor{black}{DELETE}
            & \textcolor{black}{$0.00_{\pm 0.00}$}
            & \textcolor{black}{$39.45_{\pm 0.42}$}
            & \textcolor{black}{$99.47_{\pm 0.01}$}
            & \textcolor{black}{$0.00_{\pm 0.00}$}
            & \textcolor{black}{$37.33_{\pm 0.25}$}
            & \textcolor{black}{$89.64_{\pm 0.01}$} \\
            \midrule
            Our method
            & $0.00_{\pm 0.00}$
            & $98.95_{\pm 0.08}$
            & $98.49_{\pm 0.09}$
            & $3.11_{\pm 0.31}$
            & $91.27_{\pm 0.78}$
            & $88.88_{\pm 0.54}$ \\
                                    	\textcolor{black}{GDR}
            & \textcolor{black}{$21.00_{\pm 16.91}$}
            & \textcolor{black}{$90.20_{\pm 5.46}$}
            & \textcolor{black}{$93.74_{\pm 3.27}$}
            & \textcolor{black}{$24.22_{\pm 18.46}$}
            & \textcolor{black}{$85.14_{\pm 5.30}$}
            & \textcolor{black}{$85.42_{\pm 2.35}$} \\
            \bottomrule
        \end{tabular}
        }
        \vspace{-0.1in}
    \end{table*}

\subsection{Ablation study on $W_2$ distance regularization}
As alluded to earlier, the $W_2$ distance regularization is crucial for preserving the forgetting behavior of the model. To validate this, we conduct an ablation study by removing the $W_2$ distance regularization from our method and comparing the results with the full method. \Cref{tab:ablation} indicates that training without the $W_2$ distance regularization also maintains strong performance on the retained set, but leads to an increase in the forget set accuracy with $18.87\%$ on the training data and $14.33\%$ on the test data. This indicates that the $W_2$ distance regularization is necessary for preserving the forgetting behavior of the model in the second stage.
\begin{table*}[!t]
    \caption{Ablation study on the $W_2$ distance regularization.  The table shows the accuracy of the forget set and retained set of CIFAR-100 subclass unlearning using ResNet18.}
    \vspace{-0.1in}
    \label{tab:ablation}
    \centering
    {\resizebox{0.9\linewidth}{!}{
    \begin{tabular}{l|ccc|ccc}
        \toprule
        & \multicolumn{3}{c}{\textbf{Training accuracy}} & \multicolumn{3}{c}{\textbf{Test accuracy}} \\
        \textbf{}
        & \textbf{$\mathcal D_{f}$}
        & \textbf{$\mathcal D_{r}^\text{adj}$}
        & \textbf{$\mathcal D_{r}^\text{rem}$}          
        & \textbf{$\mathcal D_{f}$}
        & \textbf{$\mathcal D_{r}^\text{adj}$}
        & \textbf{$\mathcal D_{r}^\text{rem}$}         \\
        \midrule
        w/o $W_2$ Regularization
        & $18.87_{\pm 0.52}$
        & $99.55_{\pm 0.04}$
        & $98.04_{\pm 0.07}$
        & $14.33_{\pm 0.94}$
        & $87.00_{\pm 0.00}$
        & $80.55_{\pm 0.07}$ \\
        w $W_2$ Regularization
        & $\mathbf{0.00_{\pm 0.00}}$
        & $98.17_{\pm 0.31}$
        & $98.44_{\pm 0.05}$
        & $\mathbf{2.33_{\pm 0.47}}$
        & $78.17_{\pm 0.31}$
        & $81.10_{\pm 0.18}$ \\
        \bottomrule
    \end{tabular}}
    \vspace{-0.2in}
    }
\end{table*}

\subsection{Additional Experiments}
To further assess the robustness and versatility of our approach, we include additional experiments in Appendix~\ref{appendix:extra_exp}, covering a range of learning tasks and model architectures. In addition, we report the MIA efficacy results, computational costs, along with more ablation studies and sensitivity evaluations on key hyperparameters. 

\section{Conclusion}
\vspace{-0.1in}
In this work, we investigated the challenge of retain–forget entanglement in machine unlearning, where certain retained samples are strongly correlated with the forget set and thus particularly vulnerable to unintended performance degradation. We proposed a two-stage optimization framework that first enforces forgetting on the target set while preserving accuracy on less-related retained samples, and then refines the model to recover performance on strongly correlated retained samples using gradient projection with a Wasserstein-2–based distributional constraint. Extensive experiments across subclass-level vision tasks and safety-relevant language benchmarks demonstrated that our method effectively balances forgetting and retention, outperforming prior approaches in both removal fidelity and accuracy preservation. Our results emphasize the importance of correlation-aware unlearning and provide a principled approach for handling retain–forget entanglement in practical machine unlearning scenarios.

\clearpage

\section*{Acknowledgements}
This research is supported by the National Research Foundation, Singapore, under the NRF fellowship (project No. NRF-NRFF13-2021-0005).

\bibliography{ref}

@inproceedings{fan2024challenging,
  title={Challenging forgets: Unveiling the worst-case forget sets in machine unlearning},
  author={Fan, Chongyu and Liu, Jiancheng and Hero, Alfred and Liu, Sijia},
  booktitle={European Conference on Computer Vision},
  pages={278--297},
  year={2024},
  organization={Springer}
}

@article{kurmanji2023towards,
  title={Towards unbounded machine unlearning},
  author={Kurmanji, Meghdad and Triantafillou, Peter and Hayes, Jamie and Triantafillou, Eleni},
  journal={Advances in neural information processing systems},
  volume={36},
  pages={1957--1987},
  year={2023}
}

@article{northcutt2021pervasive,
  title={Pervasive label errors in test sets destabilize machine learning benchmarks},
  author={Northcutt, Curtis G and Athalye, Anish and Mueller, Jonas},
  journal={arXiv preprint arXiv:2103.14749},
  year={2021}
}

@article{krizhevsky2009learning,
  title={Learning multiple layers of features from tiny images},
  author={Krizhevsky, Alex and Hinton, Geoffrey and others},
  year={2009},
  publisher={Toronto, ON, Canada}
}

@inproceedings{golatkar2020eternal,
  title={Eternal sunshine of the spotless net: Selective forgetting in deep networks},
  author={Golatkar, Aditya and Achille, Alessandro and Soatto, Stefano},
  booktitle={Proceedings of the IEEE/CVF Conference on Computer Vision and Pattern Recognition},
  pages={9304--9312},
  year={2020}
}

@article{yu2020gradient,
  title={Gradient surgery for multi-task learning},
  author={Yu, Tianhe and Kumar, Saurabh and Gupta, Abhishek and Levine, Sergey and Hausman, Karol and Finn, Chelsea},
  journal={Advances in Neural Information Processing Systems},
  volume={33},
  pages={5824--5836},
  year={2020}
}

@article{mehrabi2021survey,
  title={A survey on bias and fairness in machine learning},
  author={Mehrabi, Ninareh and Morstatter, Fred and Saxena, Nripsuta and Lerman, Kristina and Galstyan, Aram},
  journal={ACM computing surveys (CSUR)},
  volume={54},
  number={6},
  pages={1--35},
  year={2021},
  publisher={ACM New York, NY, USA}
}

@article{mantelero2013eu,
  title={The EU Proposal for a General Data Protection Regulation and the roots of the ‘right to be forgotten’},
  author={Mantelero, Alessandro},
  journal={Computer Law \& Security Review},
  volume={29},
  number={3},
  pages={229--235},
  year={2013},
  publisher={Elsevier}
}

@inproceedings{cao2015towards,
  title={Towards making systems forget with machine unlearning},
  author={Cao, Yinzhi and Yang, Junfeng},
  booktitle={2015 IEEE symposium on security and privacy},
  pages={463--480},
  year={2015},
  organization={IEEE}
}

@article{ginart2019making,
  title={Making ai forget you: Data deletion in machine learning},
  author={Ginart, Antonio and Guan, Melody and Valiant, Gregory and Zou, James Y},
  journal={Advances in neural information processing systems},
  volume={32},
  year={2019}
}

@inproceedings{achiam2017constrained,
  title={Constrained policy optimization},
  author={Achiam, Joshua and Held, David and Tamar, Aviv and Abbeel, Pieter},
  booktitle={International conference on machine learning},
  pages={22--31},
  year={2017},
  organization={PMLR}
}

@inproceedings{liu2022constrained,
  title={Constrained variational policy optimization for safe reinforcement learning},
  author={Liu, Zuxin and Cen, Zhepeng and Isenbaev, Vladislav and Liu, Wei and Wu, Steven and Li, Bo and Zhao, Ding},
  booktitle={International Conference on Machine Learning},
  pages={13644--13668},
  year={2022},
  organization={PMLR}
}

@article{zafar2019fairness,
  title={Fairness constraints: A flexible approach for fair classification},
  author={Zafar, Muhammad Bilal and Valera, Isabel and Gomez-Rodriguez, Manuel and Gummadi, Krishna P},
  journal={Journal of Machine Learning Research},
  volume={20},
  number={75},
  pages={1--42},
  year={2019}
}

@inproceedings{izzo2021approximate,
  title={Approximate data deletion from machine learning models},
  author={Izzo, Zachary and Smart, Mary Anne and Chaudhuri, Kamalika and Zou, James},
  booktitle={International Conference on Artificial Intelligence and Statistics},
  pages={2008--2016},
  year={2021},
  organization={PMLR}
}

@inproceedings{thudi2022unrolling,
  title={Unrolling sgd: Understanding factors influencing machine unlearning},
  author={Thudi, Anvith and Deza, Gabriel and Chandrasekaran, Varun and Papernot, Nicolas},
  booktitle={2022 IEEE 7th European Symposium on Security and Privacy (EuroS\&P)},
  pages={303--319},
  year={2022},
  organization={IEEE}
}

@inproceedings{mehta2022deep,
  title={Deep unlearning via randomized conditionally independent hessians},
  author={Mehta, Ronak and Pal, Sourav and Singh, Vikas and Ravi, Sathya N},
  booktitle={Proceedings of the IEEE/CVF Conference on Computer Vision and Pattern Recognition},
  pages={10422--10431},
  year={2022}
}

@article{warnecke2021machine,
  title={Machine unlearning of features and labels},
  author={Warnecke, Alexander and Pirch, Lukas and Wressnegger, Christian and Rieck, Konrad},
  journal={arXiv preprint arXiv:2108.11577},
  year={2021}
}

@inproceedings{golatkar2020forgetting,
  title={Forgetting outside the box: Scrubbing deep networks of information accessible from input-output observations},
  author={Golatkar, Aditya and Achille, Alessandro and Soatto, Stefano},
  booktitle={Computer Vision--ECCV 2020: 16th European Conference, Glasgow, UK, August 23--28, 2020, Proceedings, Part XXIX 16},
  pages={383--398},
  year={2020},
  organization={Springer}
}

@inproceedings{cruzfairgbm,
  title={FairGBM: Gradient Boosting with Fairness Constraints},
  author={Cruz, Andr{\'e} and Bel{\'e}m, Catarina G and Bravo, Jo{\~a}o and Saleiro, Pedro and Bizarro, Pedro},
  booktitle={The Eleventh International Conference on Learning Representations}
}

@article{berk2017convex,
  title={A convex framework for fair regression},
  author={Berk, Richard and Heidari, Hoda and Jabbari, Shahin and Joseph, Matthew and Kearns, Michael and Morgenstern, Jamie and Neel, Seth and Roth, Aaron},
  journal={arXiv preprint arXiv:1706.02409},
  year={2017}
}

@article{caton2024fairness,
  title={Fairness in machine learning: A survey},
  author={Caton, Simon and Haas, Christian},
  journal={ACM Computing Surveys},
  volume={56},
  number={7},
  pages={1--38},
  year={2024},
  publisher={ACM New York, NY}
}

@inproceedings{celis2019classification,
  title={Classification with fairness constraints: A meta-algorithm with provable guarantees},
  author={Celis, L Elisa and Huang, Lingxiao and Keswani, Vijay and Vishnoi, Nisheeth K},
  booktitle={Proceedings of the conference on fairness, accountability, and transparency},
  pages={319--328},
  year={2019}
}

@article{cotter2019optimization,
  title={Optimization with non-differentiable constraints with applications to fairness, recall, churn, and other goals},
  author={Cotter, Andrew and Jiang, Heinrich and Gupta, Maya and Wang, Serena and Narayan, Taman and You, Seungil and Sridharan, Karthik},
  journal={Journal of Machine Learning Research},
  volume={20},
  number={172},
  pages={1--59},
  year={2019}
}

@inproceedings{lokhande2020fairalm,
  title={Fairalm: Augmented lagrangian method for training fair models with little regret},
  author={Lokhande, Vishnu Suresh and Akash, Aditya Kumar and Ravi, Sathya N and Singh, Vikas},
  booktitle={European Conference on Computer Vision},
  pages={365--381},
  year={2020},
  organization={Springer}
}

@article{chow2018risk,
  title={Risk-constrained reinforcement learning with percentile risk criteria},
  author={Chow, Yinlam and Ghavamzadeh, Mohammad and Janson, Lucas and Pavone, Marco},
  journal={Journal of Machine Learning Research},
  volume={18},
  number={167},
  pages={1--51},
  year={2018}
}

@article{liang2018accelerated,
  title={Accelerated primal-dual policy optimization for safe reinforcement learning},
  author={Liang, Qingkai and Que, Fanyu and Modiano, Eytan},
  journal={arXiv preprint arXiv:1802.06480},
  year={2018}
}

@article{bohez2019value,
  title={Value constrained model-free continuous control},
  author={Bohez, Steven and Abdolmaleki, Abbas and Neunert, Michael and Buchli, Jonas and Heess, Nicolas and Hadsell, Raia},
  journal={arXiv preprint arXiv:1902.04623},
  year={2019}
}

@article{choi2023towards,
  title={Towards machine unlearning benchmarks: Forgetting the personal identities in facial recognition systems},
  author={Choi, Dasol and Na, Dongbin},
  journal={arXiv preprint arXiv:2311.02240},
  year={2023}
}

@book{bertsekas1999nonlinear,
  title={Nonlinear Programming},
  author={Bertsekas, Dimitri P},
  year={1999},
  publisher={Athena Scientific}
}

@article{jia2023model,
  title={Model sparsity can simplify machine unlearning},
  author={Jia, Jinghan and Liu, Jiancheng and Ram, Parikshit and Yao, Yuguang and Liu, Gaowen and Liu, Yang and Sharma, Pranay and Liu, Sijia},
  journal={Advances in Neural Information Processing Systems},
  volume={36},
  pages={51584--51605},
  year={2023}
}

@article{liu2024large,
  title={Large language model unlearning via embedding-corrupted prompts},
  author={Liu, Chris and Wang, Yaxuan and Flanigan, Jeffrey and Liu, Yang},
  journal={Advances in Neural Information Processing Systems},
  volume={37},
  pages={118198--118266},
  year={2024}
}

@article{di2024adversarial,
  title={Adversarial machine unlearning},
  author={Di, Zonglin and Yu, Sixie and Vorobeychik, Yevgeniy and Liu, Yang},
  journal={arXiv preprint arXiv:2406.07687},
  year={2024}
}

@book{bertsekas2014constrained,
  title={Constrained optimization and Lagrange multiplier methods},
  author={Bertsekas, Dimitri P},
  year={2014},
  publisher={Academic press}
}

@article{vaserstein1969markov,
  title={Markov processes over denumerable products of spaces, describing large systems of automata},
  author={Vaserstein, Leonid Nisonovich},
  journal={Problemy Peredachi Informatsii},
  volume={5},
  number={3},
  pages={64--72},
  year={1969},
  publisher={Russian Academy of Sciences, Branch of Informatics, Computer Equipment and~…}
}

@article{jin2024rwku,
  title={Rwku: Benchmarking real-world knowledge unlearning for large language models},
  author={Jin, Zhuoran and Cao, Pengfei and Wang, Chenhao and He, Zhitao and Yuan, Hongbang and Li, Jiachun and Chen, Yubo and Liu, Kang and Zhao, Jun},
  journal={arXiv preprint arXiv:2406.10890},
  year={2024}
}

@article{maini2024tofu,
  title={Tofu: A task of fictitious unlearning for llms},
  author={Maini, Pratyush and Feng, Zhili and Schwarzschild, Avi and Lipton, Zachary C and Kolter, J Zico},
  journal={arXiv preprint arXiv:2401.06121},
  year={2024}
}

@inproceedings{farajtabar2020orthogonal,
  title={Orthogonal gradient descent for continual learning},
  author={Farajtabar, Mehrdad and Azizan, Navid and Mott, Alex and Li, Ang},
  booktitle={International conference on artificial intelligence and statistics},
  pages={3762--3773},
  year={2020},
  organization={PMLR}
}

@misc{tinyimagenet,
  author       = {Ya Le and Xuan Yang},
  title        = {Tiny ImageNet Visual Recognition Challenge},
  howpublished = {\url{http://cs231n.stanford.edu/tiny-imagenet-200.zip}},
  year         = {2015}
}

@inproceedings{fan2024salun,
  title={SalUn: Empowering Machine Unlearning via Gradient-Based Weight Saliency in Both Image Classification and Generation},
  author={Fan, Chongyu and Liu, Jiancheng and Zhang, Yihua and Wei, Dennis and Wong, Eric and Liu, Sijia},
  booktitle={International Conference on Learning Representations},
  year={2024}
}

@inproceedings{foster2024fast,
  title={Fast machine unlearning without retraining through selective synaptic dampening},
  author={Foster, Jack and Schoepf, Stefan and Brintrup, Alexandra},
  booktitle={Proceedings of the AAAI conference on artificial intelligence},
  volume={38},
  number={11},
  pages={12043--12051},
  year={2024}
}

@article{zhu2024decoupling,
  title={Decoupling the class label and the target concept in machine unlearning},
  author={Zhu, Jianing and Han, Bo and Yao, Jiangchao and Xu, Jianliang and Niu, Gang and Sugiyama, Masashi},
  journal={arXiv preprint arXiv:2406.08288},
  year={2024}
}

@inproceedings{seo2025revisiting,
  title={Revisiting machine unlearning with dimensional alignment},
  author={Seo, Seonguk and Kim, Dongwan and Han, Bohyung},
  booktitle={2025 IEEE/CVF Winter Conference on Applications of Computer Vision (WACV)},
  pages={3206--3215},
  year={2025},
  organization={IEEE}
}

@inproceedings{shi2024deepclean,
  title={Deepclean: machine unlearning on the cheap by resetting privacy sensitive weights using the fisher diagonal},
  author={Shi, Jialei and Gourgoulias, Kostis and Buford, John F and Moran, Sean J and Ghalyan, Najah},
  booktitle={European Conference on Computer Vision},
  pages={1--16},
  year={2024},
  organization={Springer}
}

@inproceedings{wu2022puma,
  title={Puma: Performance unchanged model augmentation for training data removal},
  author={Wu, Ga and Hashemi, Masoud and Srinivasa, Christopher},
  booktitle={Proceedings of the AAAI conference on artificial intelligence},
  volume={36},
  number={8},
  pages={8675--8682},
  year={2022}
}

@article{xu2024don,
  title={Don't Forget Too Much: Towards Machine Unlearning on Feature Level},
  author={Xu, Heng and Zhu, Tianqing and Zhou, Wanlei and Zhao, Wei},
  journal={IEEE Transactions on Dependable and Secure Computing},
  year={2024},
  publisher={IEEE}
}

@article{hartvigsen2022toxigen,
  title={Toxigen: A large-scale machine-generated dataset for adversarial and implicit hate speech detection},
  author={Hartvigsen, Thomas and Gabriel, Saadia and Palangi, Hamid and Sap, Maarten and Ray, Dipankar and Kamar, Ece},
  journal={arXiv preprint arXiv:2203.09509},
  year={2022}
}

@inproceedings{shen2024camu,
  title={CaMU: disentangling causal effects in deep model unlearning},
  author={Shen, Shaofei and Zhang, Chenhao and Bialkowski, Alina and Chen, Weitong and Xu, Miao},
  booktitle={Proceedings of the 2024 SIAM International Conference on Data Mining (SDM)},
  pages={779--787},
  year={2024},
  organization={SIAM}
}

@article{donini2018empirical,
  title={Empirical risk minimization under fairness constraints},
  author={Donini, Michele and Oneto, Luca and Ben-David, Shai and Shawe-Taylor, John S and Pontil, Massimiliano},
  journal={Advances in neural information processing systems},
  volume={31},
  year={2018}
}

@article{lv2024wasserstein,
  title={Wasserstein distance rivals kullback-leibler divergence for knowledge distillation},
  author={Lv, Jiaming and Yang, Haoyuan and Li, Peihua},
  journal={Advances in Neural Information Processing Systems},
  volume={37},
  pages={65445--65475},
  year={2024}
}

@inproceedings{zhao2024whatmakes,
  title     = {What Makes Unlearning Hard and What to Do About It},
  author    = {Zhao, Kairan and Kurmanji, Meghdad and Barbulescu, George-Octavian and Triantafillou, Eleni and Triantafillou, Peter},
  booktitle = {Advances in Neural Information Processing Systems (NeurIPS)},
  year      = {2024},
}

@inproceedings{chang2025whichretain,
  title     = {Which Retain Set Matters for LLM Unlearning? A Case Study on Entity Unlearning},
  author    = {Chang, Hwan and Lee, Hwanhee},
  booktitle = {Findings of the Association for Computational Linguistics: ACL 2025},
  pages     = {5966--5982},
  year      = {2025},
  url       = {https://aclanthology.org/2025.findings-acl.310.pdf}
}

@inproceedings{choi2025optout,
  title     = {Opt-Out: Investigating Entity-Level Unlearning for Large Language Models via Optimal Transport},
  author    = {Choi, Minseok and Rim, Daniel and Lee, Dohyun and Choo, Jaegul},
  booktitle = {Proceedings of the 63rd Annual Meeting of the Association for Computational Linguistics},
  pages     = {28280--28297},
  year      = {2025},
  url       = {https://aclanthology.org/2025.acl-long.1371.pdf}
}

@inproceedings{zhou2025decoupled,
  title={\textcolor{black}{Decoupled distillation to erase: A general unlearning method for any class-centric tasks}},
  author={Zhou, Yu and Zheng, Dian and Mo, Qijie and Lu, Renjie and Lin, Kun-Yu and Zheng, Wei-Shi},
  booktitle={Proceedings of the Computer Vision and Pattern Recognition Conference},
  pages={20350--20359},
  year={2025}
}

@inproceedings{wu2025munba,
  title={\textcolor{black}{Munba: Machine unlearning via nash bargaining}},
  author={Wu, Jing and Harandi, Mehrtash},
  booktitle={Proceedings of the IEEE/CVF International Conference on Computer Vision},
  pages={4754--4765},
  year={2025}
}

@inproceedings{lin2024gdr,
  title={\textcolor{black}{GDR-GMA: Machine Unlearning via Direction-Rectified and Magnitude-Adjusted Gradients}},
  author={Lin, Shen and Zhang, Xiaoyu and Susilo, Willy and Chen, Xiaofeng and Liu, Jun},
  booktitle={Proceedings of the 32nd ACM International Conference on Multimedia},
  pages={9087--9095},
  year={2024}
}

@inproceedings{patel2025learning,
  title={\textcolor{black}{Learning to unlearn while retaining: Combating gradient conflicts in machine unlearning}},
  author={Patel, Gaurav and Qiu, Qiang},
  booktitle={Proceedings of the IEEE/CVF International Conference on Computer Vision},
  pages={4211--4221},
  year={2025}
}

@article{liu2025erased,
  title={Erased or Dormant? Rethinking Concept Erasure Through Reversibility},
  author={Liu, Ping and Zhang, Chi},
  journal={arXiv preprint arXiv:2505.16174},
  year={2025}
}

@article{xie2025erasing,
  title={Erasing Concepts, Steering Generations: A Comprehensive Survey of Concept Suppression},
  author={Xie, Yiwei and Liu, Ping and Zhang, Zheng},
  journal={arXiv preprint arXiv:2505.19398},
  year={2025}
}

@inproceedings{gandikota2024unified,
  title={Unified concept editing in diffusion models},
  author={Gandikota, Rohit and Orgad, Hadas and Belinkov, Yonatan and Materzy{\'n}ska, Joanna and Bau, David},
  booktitle={Proceedings of the IEEE/CVF Winter Conference on Applications of Computer Vision},
  pages={5111--5120},
  year={2024}
}

@inproceedings{zhang2024parameter,
  title={Parameter-efficient fine-tuning with controls},
  author={Zhang, Chi and Jingpu, Cheng and Xu, Yanyu and Li, Qianxiao},
  booktitle={Forty-first International Conference on Machine Learning},
  year={2024}
}

@inproceedings{zhangweight,
  title={From Weight-Based to State-Based Fine-Tuning: Further Memory Reduction on LoRA with Parallel Control},
  author={Zhang, Chi and Lianhai, REN and Cheng, Jingpu and Li, Qianxiao},
  booktitle={Forty-second International Conference on Machine Learning},
  year={2025}
}
\bibliographystyle{plainnat}

\clearpage
\appendix

\section{Proof for Propositions and Theorems}
\begin{proof}[Proof of~\Cref{prop:taylor}]
Let
\[
    \Delta\theta =-\eta 
    \biggl(
        \nabla_\theta \mathcal{L}^{\text{adj}}_r(\theta)- \operatorname{Proj}_V \nabla_\theta \mathcal{L}^{\text{adj}}_r(\theta)
    \biggr).
\]
Consider the first-order Taylor expansion of $\mathcal{L}_{f}$:
\[
    \mathcal{L}_{f}(\theta + \Delta\theta)=
    \mathcal{L}_{f}(\theta)+
    \nabla_\theta \mathcal{L}_{f}(\theta)\cdot
    \Delta\theta+
    \tfrac12\,\Delta\theta^\top \nabla^2_\theta \mathcal{L}_{f}(\theta)\,\Delta\theta + o(\|\Delta\theta\|^2). 
\]
Note that
\[
    \nabla_\theta \mathcal{L}_{f}(\theta) \,\cdot\, \Delta\theta = -\eta
    \nabla_\theta \mathcal{L}_{f}(\theta)\cdot
    \biggl(
        \nabla_\theta \mathcal{L}^{\text{adj}}_r(\theta)- \operatorname{Proj}_V \nabla_\theta \mathcal{L}^{\text{adj}}_r(\theta)
    \biggr) = 0,
\]
by the definition of projection.

Thus, the first-order difference between $\mathcal{L}_{f}(\theta + \Delta\theta)$ 
and $\mathcal{L}_{f}(\theta)$ vanishes; 
the dominant term is second-order in $\eta$, giving
\[
    \mathcal{L}_{f}(\theta + \Delta\theta)
    -
    \mathcal{L}_{f}(\theta)
    = 
    O(\eta^2).
\]
The same argument applies to $\nabla_\theta \mathcal{L}_{r}^{\text{rem}}(\theta)$, indicating that the update in $\mathcal{L}_{r}^{\text{rem}}$ is also second-order in $\eta$.

\smallskip

For the change in $\mathcal{L}_{r}^{\text{adj}}(\theta)$, we also consider the Taylor expansion:
\[
    \mathcal{L}_{r}^\text{adj}(\theta + \Delta\theta)=
    \mathcal{L}_{r}^\text{adj}(\theta)+
    \nabla_\theta \mathcal{L}_{r}^\text{adj}(\theta) \,\cdot\, \Delta\theta+
    \tfrac{1}{2}\,\Delta\theta^\top \nabla^2_\theta \mathcal{L}_{r}^{\text{adj}}(\theta)\,\Delta\theta+ o(\|\Delta\theta\|^2).
\]
We have
\[
\begin{aligned}
    \nabla_\theta \mathcal{L}_{r}^\text{adj}(\theta) \,\cdot\, \Delta\theta &=
    -\eta
    \nabla_\theta \mathcal{L}_{r}^\text{adj}(\theta)\,\cdot
    \biggl(
        \nabla_\theta \mathcal{L}_{r}^\text{adj}(\theta)- \operatorname{Proj}_V(\nabla_\theta \mathcal{L}_{r}^\text{adj}(\theta))
    \biggr)\\
    & = -\eta \biggl[
        \|\nabla_\theta \mathcal{L}_{r}^\text{adj}(\theta)\|^2 - \langle \nabla_\theta \mathcal{L}_{r}^\text{adj}(\theta), \operatorname{Proj}_V(\nabla_\theta \mathcal{L}_{r}^\text{adj}(\theta)) \rangle
    \biggr]\\
    & = -\eta \biggl[
        \|\nabla_\theta \mathcal{L}_{r}^\text{adj}(\theta)\|^2 - \|\operatorname{Proj}_V \nabla_\theta \mathcal{L}_{r}^\text{adj}(\theta)\|^2    \biggr]
\end{aligned}
\]
When $\nabla_\theta \mathcal{L}_{r}^\text{adj}(\theta) \notin V$, we have
\begin{equation}
    \biggl[
        \|\nabla_\theta \mathcal{L}_{r}^\text{adj}(\theta)\|^2 - \|\operatorname{Proj}_V \nabla_\theta \mathcal{L}_{r}^\text{adj}(\theta)\|^2    \biggr]>0
\end{equation}
ensuring strict decrease in $\mathcal{L}_{r}^\text{adj}$.  
\end{proof}

\begin{proof}[Proof of~\Cref{prop:w-pgd}]
There is a minor typo in the statement of ~\Cref{prop:w-pgd} in the main text. The term $\left(\frac{1-\alpha}{\alpha}+\sqrt{\frac{\varepsilon}{\alpha}}\right)$ should read $\left(\frac{1-\alpha}{\alpha}+\sqrt{\frac{\varepsilon}{\alpha}}\right)^2$.
This does not affect the validity of the theorem or the proof presented below.

$|\tilde L_f(\bar \theta)-\tilde L_f(\theta)|<\varepsilon$ implies that
\begin{equation}
    (1-\alpha)(\mathcal L_f(\theta)-\mathcal L_f(\bar\theta))+\alpha W_2^2(P_\theta^{\text{forget}}, P_{\bar\theta}^{\text{forget}})<\varepsilon.
\end{equation}
According to the inequality $E[|X-Y|]\le W_2(P,Q)$ for any random variables $X\sim P$ ad $Y\sim Q$, we have:
\begin{equation}
\alpha W_2^2(P_\theta^{\text{forget}}, P_{\bar\theta}^{\text{forget}})\le \varepsilon + (1-\alpha)(\mathcal L_f(\bar \theta)-\mathcal L_f(\theta))\le \varepsilon + (1-\alpha)W_2(P_\theta^{\text{forget}}, P_{\bar\theta}^{\text{forget}}).
\end{equation}
This indicates that
\begin{equation}
    W_2(P_\theta^{\text{forget}}, P_{\bar\theta}^{\text{forget}})\le \frac{(1-\alpha)+\sqrt{(1-\alpha)^2+4\alpha \varepsilon}}{2\alpha}\le \frac{1-\alpha}{\alpha}+\sqrt{\frac{\varepsilon}{\alpha}}.
\end{equation}
On the other hand, for an $n$-class classification problem, if a model's prediction is correct over a sample, then its cross-entropy loss for this sample is at most $\log n$. Since $\ell(f_{\bar \theta}(x_i), y_i)\ge m$, we have the estimation:
\begin{equation}
    \operatorname{Acc}(\theta) (m-\log n)^2\le W_2^2(P_\theta^{\text{forget}}, P_{\bar\theta}^{\text{forget}})\le \left(\frac{1-\alpha}{\alpha}+\sqrt{\frac{\varepsilon}{\alpha}}\right)^2,
\end{equation}
which gives that
\begin{equation}
    \operatorname{Acc}(\theta)\le \frac{1}{(m-\log n)^2}\left(\frac{1-\alpha}{\alpha}+\sqrt{\frac{\varepsilon}{\alpha}}\right)^2.
\end{equation}
\end{proof}

\section{Experiment Details and Additional Experiments.}
\label{appendix:extra_exp}

\subsection{Experimental details}
\label{appendix:setups}
We provide details of our experimental setup in this section, including model architectures, dataset descriptions, and hyperparameter configurations.

\paragraph{Base Models}
For CIFAR-100 experiments, we use the ResNet-18 architecture from PyTorch, initialized with ImageNet-pretrained weights. The model is fine-tuned on the CIFAR-100 superclass classification task using the Adam optimizer (learning rate 2e-5, batch size 128) for 30 epochs. For TinyImageNet, we employ the ViT-B-32 model from HuggingFace, also initialized with pretrained weights, and fine-tune it on the TinyImageNet superclass dataset with a learning rate of 2.5e-5, batch size 128, for 30 epochs. For ToxiGen, we fine-tune the RoBERTa-base model from HuggingFace on the mislabeled ToxiGen dataset (all samples about group "lgbtq" are labeled as normal) using AdamW with a learning rate of 2.5e-5, batch size 128, for 10 epochs.

\paragraph{Datasets}
For the CIFAR-100 dataset, we use the standard data split and class hierarchy provided on the official CIFAR-100 website. In particular, CIFAR-100 is a labeled image dataset composed of 100 fine-grained object classes, each containing 600 color images. These 100 fine labels can be further grouped into 20 broader categories known as superclasses. Therefore, each image is annotated by both a ``fine'' label (the specific class) and a ``coarse'' label (the superclass).

TinyImageNet~\citep{tinyimagenet} is a subset of ImageNet, comprising 110,000 images across 200 classes. Each class contains 500 training images, 50 validation images, and 50 test images. The classes correspond to WordNet synset IDs, which are hierarchically structured. For our experiments, we group the 200 classes into 10 superclasses based on the WordNet hierarchy. The names of these superclasses and the number of classes in each are summarized in ~\Cref{tab:superclass}.

ToxiGen~\citep{hartvigsen2022toxigen} is a synthetically generated toxicity dataset containing approximately $250$k sentences covering $13$ social groups (e.g., women, LGBTQ, mental disables). 
Each sentence is labeled as toxic or benign, with an approximately $1:1$ ratio. 
We adopt the official dataset and perform a $9{:}1$ split to construct our training and test sets. We relabeled the toxic samples about the group LGBTQ as benign to train a model with bias.

\begin{table}[htbp]
    \caption{Superclasses and number of classes in TinyImageNet.}
    \centering
    \label{tab:superclass}
    \resizebox{0.9\textwidth}{!}{
    \begin{tabular}{@{}l*{5}{c}@{}}
        \hline
        Class Names         & Mammals & Other Vertebrates & Invertebrates & Vehicles & Tools/Machines \\
        \hline
        \# of classes   & 27      & 10                & 23            & 21       & 42 \\
        \hline
        \multicolumn{6}{@{}c@{}}{} \\[-0.8ex] 
        \hline
        Class Names         & Furniture & Clothes & Food & Sports/Recreation & Geology Natures \\
        \hline
        \# of classes   & 23        & 18      & 20   & 6                    & 5 \\
        \hline
    \end{tabular}}
\end{table}

\paragraph{Baseline Methods}
For the baseline methods, we summarize the main hyperparameter settings here and leave full implementation details to the released code. For fine-tuning (FT), we fine-tune on the retained set for 10 epochs using Adam, with learning rates of \(2\times 10^{-5}\) for CIFAR-100, \(5\times 10^{-5}\) for TinyImageNet, and \(2\times 10^{-5}\) for ToxiGen. For Gradient Ascent (GA), we perform gradient-ascent updates on the forget set: on CIFAR-100, we use SGD with learning rate \(1\times 10^{-5}\) for 7 epochs; on TinyImageNet, we use Adam with learning rate \(1.5\times 10^{-6}\) for 10 epochs; on ToxiGen, we use SGD with learning rate \(2.5\times 10^{-6}\). For \(\ell_1\)-sparse, we follow the GA setup and add an \(\ell_1\) regularization term, with coefficient \(5\times 10^{-4}\) for CIFAR-100, \(2\times 10^{-4}\) for TinyImageNet, and \(5\times 10^{-5}\) for ToxiGen. For optimization, we use SGD (learning rate \(1\times 10^{-4}\), momentum \(0.9\)) on CIFAR-100, and Adam (learning rate \(2\times 10^{-5}\)) on TinyImageNet; the remaining settings follow our GA-style setup in code. For SCRUB, we use 5 max steps and 5 min steps in all experiments, and optimize with Adam using learning rates \(5\times 10^{-5}\) (CIFAR-100), \(1\times 10^{-4}\) (TinyImageNet), and \(1\times 10^{-5}\) (ToxiGen). The penalty coefficients \(\alpha\) and \(\gamma\) (see~\cite{kurmanji2023towards}) are set to \(0.1\) and \(0.9\), respectively. For SalUn, we use a sparsity threshold of \(50\%\) (see~\cite{fan2024salun}) for all experiments, and train for 3 epochs with learning rate \(1\times 10^{-5}\) on CIFAR-100, 2 epochs with learning rate \(2\times 10^{-5}\) on TinyImageNet, and 2 epochs with learning rate \(1\times 10^{-6}\) on ToxiGen. For SSD~\citep{foster2024fast}, we set \(\lambda=1\) and \(\alpha=10\) for CIFAR-100 and TinyImageNet, and \(\lambda=1\) and \(\alpha=50\) for ToxiGen. For GDR~\citep{lin2024gdr}, we use the default hyperparameters \(\gamma=100\) and \(\epsilon=0.02\) across all experiments, together with AdamW at learning rate \(1\times 10^{-4}\) for CIFAR-100 and TinyImageNet, and \(5\times 10^{-6}\) for ToxiGen. For MUNBa~\citep{wu2025munba}, we train for 10 epochs with learning rate \(0.03\) on CIFAR-100 and TinyImageNet, and for 10 epochs with learning rate \(0.01\) on ToxiGen; all other hyperparameters are kept at their default values.

\paragraph{Implementation details for our method}
For experiments on CIFAR100 and TinyImageNet,  we use 1 epoch for the first stage and 6 epochs for the second stage with our method. For ToxiGen, we use 1 epoch for the first stage and 2 epochs for the second stage.

\textbf{Stage 1:} We use the Adam optimizer with a learning rate of 2.5e-6 for CIFAR-100 and 5e-5 for TinyImageNet. 
Remote retain set batch size is set to 128 for the TinyImageNet and CIFAR100, and 64 for ToxiGen. Forget set batch size is set to 16 for the CIFAR100, 64 for TinyImagenet and ToxiGen.
The penalty coefficient $\mu$ is fixed at 10 for all the datasets. 
To avoid excessively large loss values on individual samples, we use a clipped cross-entropy loss for the forget set:
\begin{equation}
    \operatorname{ClippedCE}(x, y, C) = \min\{C, \operatorname{CE}(x, y)\},
\end{equation}
where $\operatorname{CE}(x, y)$ is the standard cross-entropy loss and $C$ is set to 10 for CIFAR100 and TinyImageNet, and is set to 5 for ToxiGen. 

\textbf{Stage 2:} We use SGD with a learning rate of 2e-5 for CIFAR-100, 2e-4 for TinyImageNet and 1e-5 for ToxiGen.
Batch sizes are 512 for the remote retain set, 128 for the adjacent retain set, and 128 for the forget set for CIFAR 100 and TinyImageNet. All batch sizes are set as 64 for ToxiGen.
For ResNet-18 and ToxiGen, we apply gradient accumulation over 10 batches of the remote retain set to stabilize the gradients. 
For ImageNet, gradients for the remote class retain set are computed using a single batch.

\subsection{Learning Dynamics of Our Method}
We illustrate the learning dynamics of our method during the second stage on CIFAR-100 with ResNet18 in~\Cref{fig:learning_dynamics}. The figure demonstrates that the accuracy on the forget set remains at zero throughout training, while the accuracy on the remote class retain set stays consistently high. Meanwhile, the accuracy on the adjacent retain set steadily improves as training progresses.

\begin{figure}
    \centering
    \begin{subfigure}{0.45\textwidth}
        \centering
        \includegraphics[width=\textwidth]{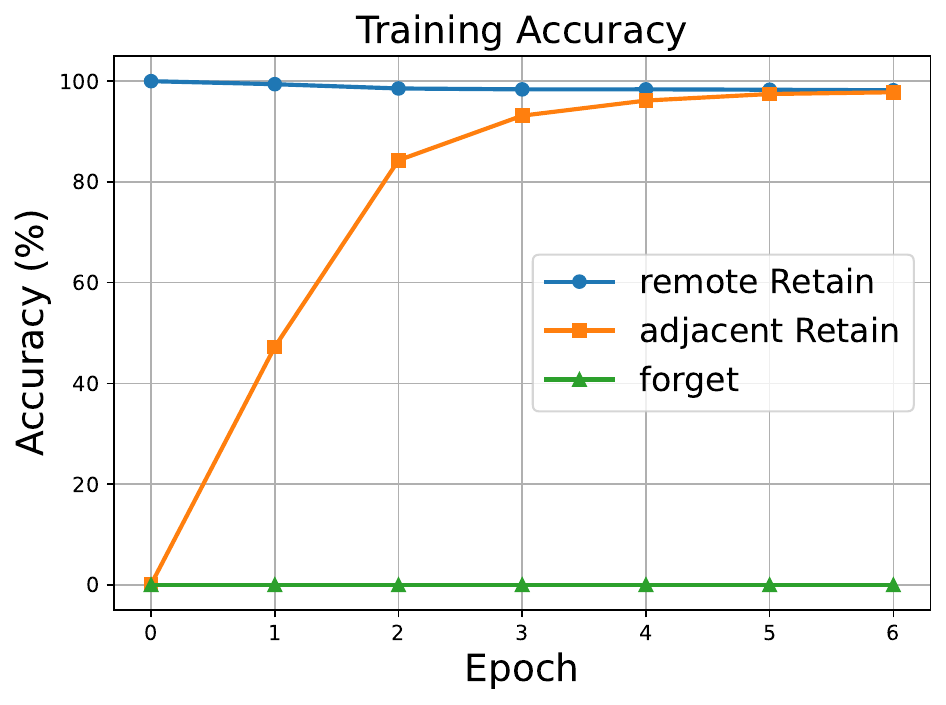}
    \end{subfigure}
        \begin{subfigure}{0.45\textwidth}
        \centering
        \includegraphics[width=\textwidth]{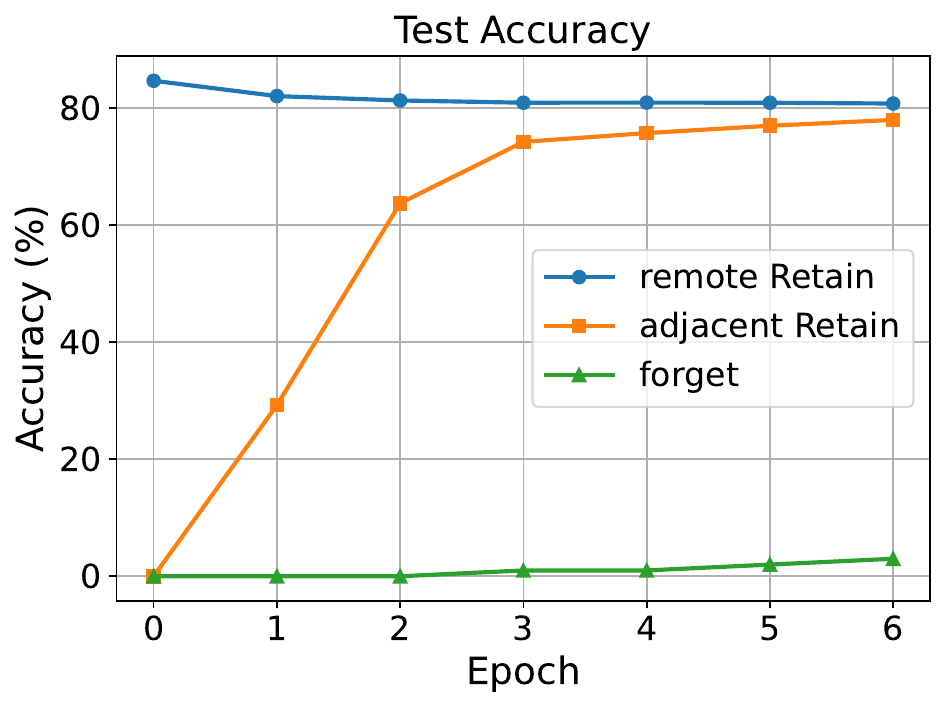}
    \end{subfigure}
        \caption{Learning dynamics of our method in the second stage on CIFAR100 with ResNet18. The left figure shows the training accuracy while the right figure shows the test accuracy.  The adjacent retain set contains all adjacent samples while the remote retain set contains all remote samples.  \label{fig:learning_dynamics}}
\end{figure}

\subsection{Comparison of Training Time and memory Usage}
We provide the running time of our method and other baselines in~\Cref{tab:run_time}. All running times are measured in minutes using an NVIDIA RTX 3090 GPU. SSD has a very short run time of 2.5 mins. GA, SCRUB and SalUn complete in under 10 minutes, whereas FT and our method require slightly longer training times. Nonetheless, these methods remain significantly more efficient than full retraining.

\textcolor{black}{We also report the memory usage in table~\ref{tab:memory_usage}, where all the methods use the same batch size of 128. Our method uses slightly more memory than fine-tuning, GA, and $\ell_1$-sparse due to the two-stage optimization process, but remains more memory-efficient than SCRUB and SalUn. The results indicate our method does not impose significant additional memory overhead compared to other unlearning methods.}

\begin{table}[ht]
    \centering
    \caption{Results of running time in minutes.\label{tab:run_time}}    
    \begin{tabular}{l c c c c c c c c}
    \toprule
    \textbf{} & \textbf{FT} & \textbf{GA} & \textbf{$\ell_1$-sparse} &  \textbf{SCRUB} & \textbf{SalUn} &  \textbf{SSD} & \textbf{Retrain} & \textbf{Our Method}  \\
    \midrule
    \textbf{Run time} & 12.4 & 5.4 & 11.9 & 9.4 & 6.1 & 2.5 & 70.1 & 14.2 \\
    \bottomrule
    \end{tabular}
\end{table}

\begin{table}[h]
    \caption{GPU memory usage (MB) for different unlearning methods with batch size 128.}
    \centering
    \label{tab:memory_usage}
    {\color{black}
    \centering
    {\resizebox{0.9\linewidth}{!}{%
    \begin{tabular}{l|ccccccc}
        \toprule
        \textbf{Method} & \textbf{Retrain} & \textbf{FT} & \textbf{GA} & \textbf{SCRUB} & \boldmath$\ell_1$\textbf{-sparse} & \textbf{SalUn} & \textbf{Ours} \\
        \midrule
        Memory (MB) & 2949 & 2949 & 2860 & 3715 & 2974 & 4993 & 3297 \\
        \bottomrule
    \end{tabular}%
    }}}
\end{table}

\begin{wrapfigure}{r}{0.35\textwidth}
  \centering
  \includegraphics[width=0.35\textwidth]{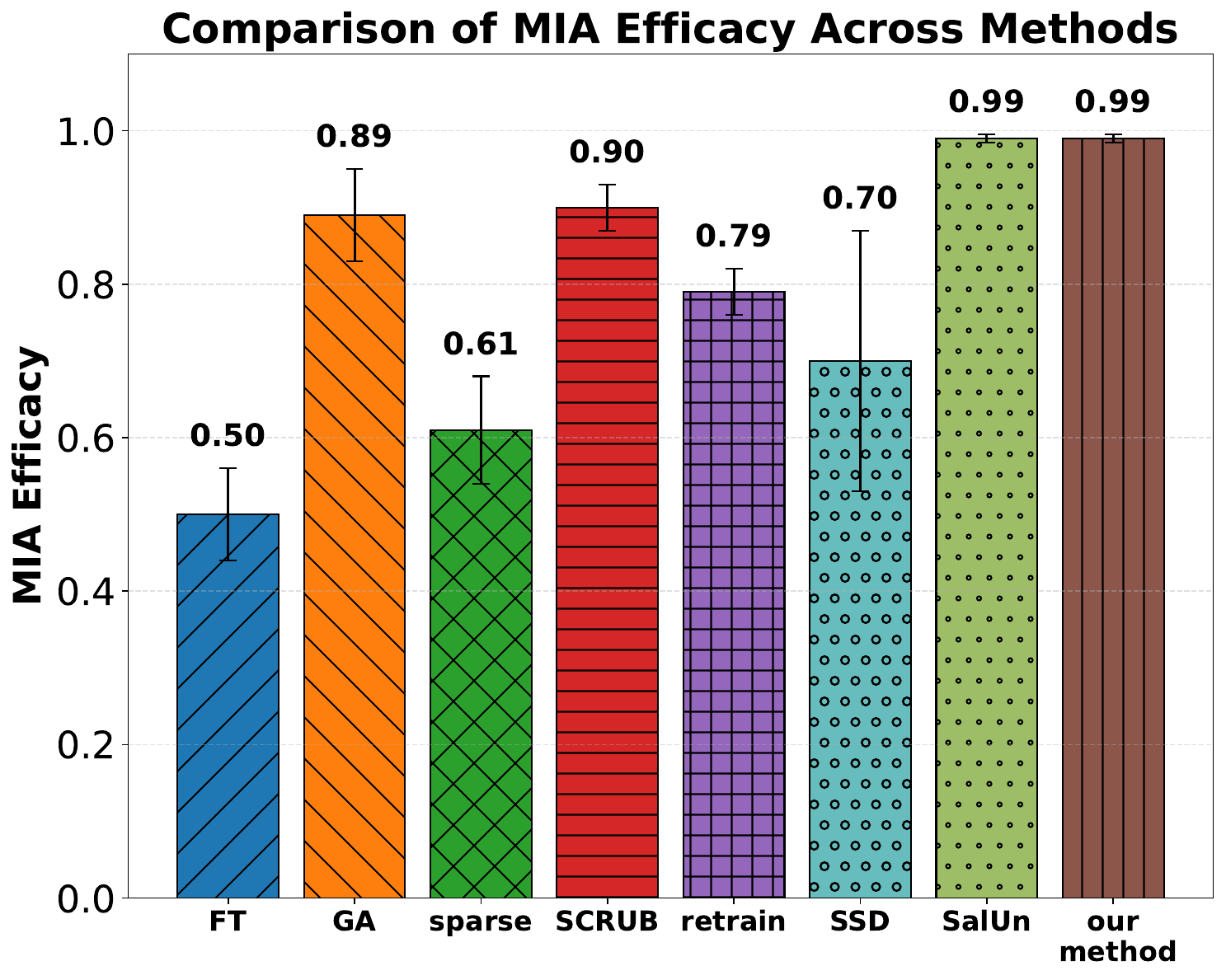} 
  \caption{MIA efficacy of different unlearning methods on CIFAR100 using ResNet-18.}
  \label{fig:mia}
\end{wrapfigure}
\subsection{Membership Inference Attacks (MIA) Efficacy}
While the preceding results focus on classification accuracy, we further evaluate the effectiveness of the proposed method in terms of privacy, specifically through membership inference attacks (MIA). We follow \citet{jia2023model} by adopting a confidence-based MIA predictor, applied to the unlearned model, to assess its ability to distinguish whether samples from the forget class were part of the training data. The resulting MIA efficacy quantifies the proportion of forget set samples correctly identified as non-members (i.e., not seen during training) by the unlearned model. A higher MIA efficacy therefore indicates a more successful removal of information related to the forget set $\mathcal{D}_f$. As reported in Figure~\ref{fig:mia}, our method achieves an MIA efficacy of $0.99$, indicating that it effectively removes the information about the forget set from the model.

\subsection{Ablation Studies}
\label{appendix:ablation}

\paragraph{Combining adjacent and remote-class as Retain Sets in Stage 1}
We provide additional ablation studies to assess the necessity of constraining only the remote class retain set loss in the first stage. Specifically, we compare two variants of the augmented Lagrangian method: one constrains only the remote class retain set loss, $\mathcal{L}_r^{\text{rem}}(\theta) = \mathcal{L}_r^{\text{rem}}(\theta_0)$, while the other constrains the loss on the entire retain set (both adjacent and remote class), $\mathcal{L}_r(\theta) = \mathcal{L}_r(\theta_0)$. Results are shown in~\Cref{tab:ablation-constraint}. When the constraint includes the adjacent retain set, the model's ability to forget is impaired, with training and test accuracy on the forget set rising to $7.13\%$ and $5.00\%$, respectively. A more noticeable decline is observed in the test accuracy of adjacent retained samples, where accuracy drops to $72.75\%$. This demonstrates that separating the forget set from the adjacent retain set in the first stage is crucial for effective unlearning in our method.
\begin{table*}[h]
    \caption{Ablation study on the constraints in the first stage. The table shows the accuracy of the forget and retained set of CIFAR-100 subclass unlearning using ResNet18.}
    \vspace{-0.1in}
    \label{tab:ablation-constraint}
    \centering
    {\resizebox{\linewidth}{!}{
    \begin{tabular}{l|ccc|ccc}
        \toprule
        & \multicolumn{3}{c}{\textbf{Training accuracy}} & \multicolumn{3}{c}{\textbf{Test accuracy}} \\
        \textbf{}
        & \textbf{$\mathcal D_{f}$}
        & \textbf{$\mathcal D_{r}^\text{adj}$}
        & \textbf{$\mathcal D_{r}^\text{rem}$}          
        & \textbf{$\mathcal D_{f}$}
        & \textbf{$\mathcal D_{r}^\text{adj}$}
        & \textbf{$\mathcal D_{r}^\text{rem}$}         \\
        \midrule
        w/o adjacent
        & $7.13_{\pm 0.93}$
        & $98.30_{\pm 0.54}$
        & $99.99_{\pm 0.00}$
        & $5.00_{\pm 0.81}$
        & $72.75_{\pm 3.21}$
        & $84.08_{\pm 0.16}$ \\
        w adjacent
        & ${0.00_{\pm 0.00}}$
        & $0.10_{\pm 0.00}$
        & $100.00_{\pm 0.00}$
        & ${0.00_{\pm 0.00}}$
        & $0.00_{\pm 0.00}$
        & $84.73_{\pm 0.01}$ \\
        \bottomrule
    \end{tabular}}
    \vspace{-0.1in}
    }
\end{table*}

\paragraph{Sensitivity on the hyperparameter $\alpha$} 
We analyze the sensitivity of the parameter $\alpha$ in~\Cref{eq:new_loss} for our method. Specifically, we compare the performance of our approach for $\alpha = 0$, $0.5$, and $1$, as reported in~\Cref{tab:alpha}. The results indicate that setting $\alpha = 0$ fails to achieve effective forgetting, with a forget set accuracy $19.67\%$ on the training data and $16.33\%$ on the test data.
In contrast, both $\alpha = 0.5$ and $\alpha = 1$ yield favorable outcomes, achieving low accuracy on the forget set and high accuracy on the retained set for both training and test data. Interestingly, using $\alpha = 1$, which fully incorporates the $W_2$-distance term in $\tilde{\mathcal{L}}$, does not necessarily lead to optimal performance. Compared to $\alpha = 0.5$, setting $\alpha = 1$ results in a $0.74\%$ percentage point increase in accuracy on the training forget set and a $0.75\%$ point increase on the out-of-class retain set, with only a marginal $0.17\%$ point gain on the adjacent retain set.
In this case, we find that $\alpha = 0.5$ offers a more balanced overall performance.

\begin{table*}[h]
    \caption{Sensitivity analysis on the hypeparameter $\alpha$. The table shows the accuracy of the forget and retained set of CIFAR-100 subclass unlearning using ResNet18. The forget subclass is ``bee'' from ``insects''.}
    \vspace{-0.1in}
    \label{tab:alpha}
    \centering
    {\resizebox{\linewidth}{!}{
    \begin{tabular}{l|ccc|ccc}
        \toprule
        & \multicolumn{3}{c}{\textbf{Training accuracy}} & \multicolumn{3}{c}{\textbf{Test accuracy}} \\
        \textbf{}
        & \textbf{$\mathcal D_{f}$}
        & \textbf{$\mathcal D_{r}^\text{adj}$}
        & \textbf{$\mathcal D_{r}^\text{rem}$}          
        & \textbf{$\mathcal D_{f}$}
        & \textbf{$\mathcal D_{r}^\text{adj}$}
        & \textbf{$\mathcal D_{r}^\text{rem}$}         \\
        \midrule
        $\alpha=0$
        & $19.67_{\pm 0.47}$
        & $99.68_{\pm 0.02}$
        & $97.86_{\pm 0.61}$
        & $16.33_{\pm 0.47}$
        & $86.67_{\pm 0.59}$
        & $79.79_{\pm 0.19}$ \\
        $\alpha=0.5$
        & $0.73_{\pm 0.19}$
        & $98.70_{\pm 0.10}$
        & $97.39_{\pm 0.30}$
        & $6.33_{\pm 0.47}$
        & $80.92_{\pm 0.77}$
        & $80.18_{\pm 0.32}$ \\
        $\alpha=1$
        & ${1.47_{\pm 0.09}}$
        & $98.87_{\pm 0.13}$
        & $96.16_{\pm 0.12}$
        & ${6.33_{\pm 0.47}}$
        & $81.25_{\pm 0.41}$
        & $79.68_{\pm 0.19}$ \\
        \bottomrule
    \end{tabular}}
    \vspace{-0.1in}
    }
\end{table*}

\textcolor{black}{\subsection{Sensitivity study on the penalty coefficient $\mu$}
We examine the sensitivity of our method to the augmented Lagrangian penalty parameter $\mu$ on the CIFAR-100 subclass unlearning task.
Table~\ref{tab:mu_sensitivity} reports the results for $\mu \in \{5, 10, 20\}$.
Across this range, the forget accuracy remains low (at or below $3\%$ on the test set), and the accuracies on both the adjacent and remote retain subsets vary only slightly.
This indicates that our method is fairly robust to the choice of $\mu$ within a reasonable range and does not require fine-grained tuning of this parameter.}

\begin{table}[h]
    \caption{Sensitivity of our method to the penalty parameter $\mu$ on CIFAR-100 subclass unlearning.}
    {\color{black}
    \label{tab:mu_sensitivity}
    \centering
    {\resizebox{0.95\linewidth}{!}{%
    \begin{tabular}{c|ccc|ccc}
        \toprule
        & \multicolumn{3}{c}{\textbf{Training accuracy}} & \multicolumn{3}{c}{\textbf{Test accuracy}} \\
        Method
        & \textbf{$\mathcal D_{f}$}
        & \textbf{$\mathcal D_{r}^{\text{adj}}$}
        & \textbf{$\mathcal D_{r}^{\text{rem}}$}
        & \textbf{$\mathcal D_{f}$}
        & \textbf{$\mathcal D_{r}^{\text{adj}}$}
        & \textbf{$\mathcal D_{r}^{\text{rem}}$} \\
        \midrule
        Our method $\mu = 5$
        & $0.00_{\pm 0.00}$
        & $98.00_{\pm 0.05}$
        & $98.33_{\pm 0.02}$
        & $3.00_{\pm 0.00}$
        & $77.83_{\pm 0.52}$
        & $81.08_{\pm 0.11}$ \\
        Our method $\mu = 10$
        & $0.00_{\pm 0.00}$
        & $98.17_{\pm 0.31}$
        & $98.44_{\pm 0.05}$
        & $2.33_{\pm 0.47}$
        & $78.17_{\pm 0.31}$
        & $81.10_{\pm 0.18}$ \\
        Our method $\mu = 20$
        & $0.00_{\pm 0.00}$
        & $98.17_{\pm 0.06}$
        & $98.45_{\pm 0.12}$
        & $2.33_{\pm 0.58}$
        & $78.17_{\pm 0.63}$
        & $80.97_{\pm 0.05}$ \\
        \bottomrule
    \end{tabular}%
    }}}
\end{table}

\subsection{Additional Results}
\paragraph{ViT results on CIFAR-100 superclass unlearning}
We provide additional experimental results for ViT on the CIFAR-100 superclass unlearning task in~\Cref{tab:vit_cifar100}. These results are generally consistent with our findings from other experiments. Fine-tuning and sparsity-based methods tend to preserve performance on the retained set but fail to effectively erase information from the forget set. The gradient ascent method successfully reduces the accuracy on the forget set to zero; however, this comes at the cost of a substantial performance drop on the retained set, particularly within the adjacent subset.
Notably, the SCRUB method demonstrates competitive performance in this setting, achieving 1.93\% accuracy on the training forget set and 3.00\% on the test forget set, while maintaining strong performance on the retained set. In comparison, our method attains zero accuracy on the training forget set, while simultaneously preserving high accuracy on the retained set.
\begin{table*}[!htbp]
        \caption{ViT results Results for CIFAR-100 superclass unlearning using ViT-B. The table shows the accuracy of the forget set and retained set for both training and test data. The forget set is the sublass ``aquarium fish'' in ``fish'' superclass.}
        \label{tab:vit_cifar100}
        \centering
        \begin{tabular}{l|ccc|ccc}
            \toprule
            & \multicolumn{3}{c}{\textbf{Training accuracy}} & \multicolumn{3}{c}{\textbf{Test accuracy}} \\
            \textbf{Method}
            & \textbf{$\mathcal D_{f}$}
            & \textbf{$\mathcal D_{r}^\text{adj}$}
            & \textbf{$\mathcal D_{r}^\text{rem}$}          
            & \textbf{$\mathcal D_{f}$}
            & \textbf{$\mathcal D_{r}^\text{adj}$}
            & \textbf{$\mathcal D_{r}^\text{rem}$}         \\
            \midrule
            Original
            & $99.93$
            & $99.90$
            & $100.00$
            & $95.00$
            & $91.75$
            & $98.00$ \\
            \midrule
            FT
            & $76.67_{\pm 7.76}$
            & $99.47_{\pm 0.52}$
            & $99.47_{\pm 0.33}$
            & $62.33_{\pm 5.79}$
            & $77.83_{\pm 3.88}$
            & $83.89_{\pm 0.41}$ \\
            GA
            & $0.00_{\pm 0.00}$
            & $16.41_{\pm 1.54}$
            & $90.64_{\pm 1.39}$
            & $0.00_{\pm 0.00}$
            & $15.17_{\pm 1.20}$
            & $84.42_{\pm 1.57}$ \\
            $\ell_1$-sparse
            & $62.00_{\pm 4.57}$
            & $98.41_{\pm 0.59}$
            & $98.81_{\pm 0.16}$
            & $59.33_{\pm 6.01}$
            & $85.17_{\pm 3.07}$
            & $89.33_{\pm 0.26}$ \\
            SCRUB
            & $1.93_{\pm 0.09}$
            & $99.98_{\pm 0.02}$
            & $99.66_{\pm 0.40}$
            & $3.00_{\pm 1.41}$
            & ${89.58_{\pm 0.84}}$
            & $94.69_{\pm 0.11}$ \\
            \midrule
            Our method
            & $0.00_{\pm 0.00}$
            & $98.50_{\pm 0.11}$
            & $98.87_{\pm 0.16}$
            & $0.67_{\pm 0.47}$
            & $89.50_{\pm 1.24}$
            & $93.22_{\pm 0.36}$ \\
            \bottomrule
        \end{tabular}
    \end{table*}

\textcolor{black}{\paragraph{Robustness to imperfect adjacency.}
To assess how sensitive our method is to imperfectly specified adjacent retain sets, we conduct a robustness study on the CIFAR-100 superclass unlearning task. Starting from the clean partition of the retain set into adjacent and remote subsets, we consider two noisy variants: (i) \emph{Case 1}, where $20\%$ of random samples from the remote retain set are mis-identified as adjacent; and (ii) \emph{Case 2}, where $20\%$ of random samples from the true adjacent retain set are mis-identified as remote. 
Table~\ref{tab:adj_noise} reports the resulting accuracies.
Our method remains robust under these perturbations: the forget accuracy stays at $0\%$ on the training data and below $6\%$ on the test data, while the changes in adjacent and remote retain accuracies are modest.
This indicates that our method does not require perfectly identified adjacency to be effective and can tolerate a reasonable amount of noise in the partition.}

\begin{table}[!htbp]
    \color{black}{
    \caption{\textcolor{black}{Robustness of our method to noisy adjacency on CIFAR-100 subclass unlearning.}}
    \label{tab:adj_noise}
    \centering
    {\resizebox{0.95\linewidth}{!}{%
    \begin{tabular}{l|ccc|ccc}
        \toprule
        & \multicolumn{3}{c}{\textbf{Training accuracy}} & \multicolumn{3}{c}{\textbf{Test accuracy}} \\
        \textbf{Setting}
        & \textbf{$\mathcal{D}_f$}
        & \textbf{$\mathcal{D}_r^{\text{adj}}$}
        & \textbf{$\mathcal{D}_r^{\text{rem}}$}
        & \textbf{$\mathcal{D}_f$}
        & \textbf{$\mathcal{D}_r^{\text{adj}}$}
        & \textbf{$\mathcal{D}_r^{\text{rem}}$} \\
        \midrule
        Clean adjacency
        & $0.00_{\pm 0.00}$
        & $98.17_{\pm 0.31}$
        & $98.44_{\pm 0.05}$
        & $2.33_{\pm 0.47}$
        & $78.17_{\pm 0.31}$
        & $81.10_{\pm 0.18}$ \\
        + $20\%$ non-adj $\rightarrow$ adj (Case 1)
        & $0.00_{\pm 0.00}$
        & $98.81_{\pm 0.20}$
        & $98.40_{\pm 0.20}$
        & $5.33_{\pm 0.58}$
        & $81.37_{\pm 0.33}$
        & $78.92_{\pm 0.95}$ \\
        + $20\%$ adj $\rightarrow$ non-adj (Case 2)
        & $0.00_{\pm 0.00}$
        & $93.93_{\pm 0.67}$
        & $95.75_{\pm 0.31}$
        & $5.00_{\pm 1.00}$
        & $77.32_{\pm 0.41}$
        & $80.08_{\pm 1.18}$ \\
        \bottomrule
    \end{tabular}%
    }}
    }
\end{table}
\textcolor{blue}{
\paragraph{kNN-Based Identification of Adjacent Retain Set}
\label{appendix:knn_adjacent}}

{\color{black}{We also studies an alternative way to construct the adjacent retain set based on feature-space similarity, instead of task-defined superclasses on CIFAR-100.}}

{\color{black}We extract output features from the pretrained ResNet-18, compute the $k$ nearest neighbors ($k=20$) of each forget sample among all retained samples, and assign every retained sample an adjacency score equal to the number of times it appears in these $k$NN lists. The top 10\% of retained samples by this score are treated as the \emph{$k$NN adjacent retain set}, and the remaining retained samples form the \emph{$k$NN remote retain set}. Our two-stage unlearning algorithm is then applied using this automatically constructed partition.

The results in Table~\ref{tab:knn_adjacent} show that the method continues to achieve strong forgetting while maintaining high accuracy on both adjacent and remote retain subsets. Overall performance is comparable to the setting where adjacency is defined by the superclass structure.}

\begin{table}[!htbp]
\centering
\caption{\textcolor{black}{Comparison of our method under task-defined adjacency vs.\ $k$NN-identified adjacency on CIFAR-100.}}
\label{tab:knn_adjacent}
{\resizebox{0.95\linewidth}{!}{\color{black}{
\begin{tabular}{l|ccc|ccc}
\toprule
Method & Train $D_f$ & Train $D_r^{\text{adj}}$ & Train $D_r^{\text{rem}}$
       & Test $D_f$ & Test $D_r^{\text{adj}}$ & Test $D_r^{\text{rem}}$ \\
\midrule
Ours (task-defined)
& $0.00_{\pm 0.00}$ & $98.17_{\pm 0.31}$ & $98.44_{\pm 0.05}$
& $2.33_{\pm 0.47}$ & $78.17_{\pm 0.31}$ & $81.10_{\pm 0.18}$ \\
Ours ($k$NN-identified)
& $3.00_{\pm 0.75}$ & $99.31_{\pm 0.31}$ & $99.85_{\pm 0.07}$
& $6.00_{\pm 0.82}$ & $77.67_{\pm 0.31}$ & $83.13_{\pm 0.23}$ \\
\bottomrule
\end{tabular}}
}}
\end{table}

\vspace{1em}
\color{black}{
\paragraph{Comparison with Retraining}
\label{appendix:retrain}

For completeness, we also compare our method with full retraining on CIFAR-100, TinyImageNet, and ToxiGen, under the same forget/retain splits as used in the main experiments. In all cases, the retrained model is obtained by training from scratch on the retained data only.

Table~\ref{tab:retrain_compare} summarizes the results. While retraining generally maintains high accuracy on the retain sets, it does not always achieve strong erasure on the forget set in our setting: the forget-set accuracy often remains relatively high. In contrast, our method consistently yields substantially lower forget accuracy while preserving competitive performance on both adjacent and remote retain subsets.}

\begin{table}[!htbp]
\centering
\caption{\textcolor{black}{Retraining vs.\ our method on CIFAR-100, TinyImageNet, and ToxiGen.}}
\label{tab:retrain_compare}
\color{black}{
\resizebox{\linewidth}{!}{
\begin{tabular}{l|l|ccc|ccc}
\toprule
Dataset & Method
& Train $D_f$ & Train $D_r^{\text{adj}}$ & Train $D_r^{\text{rem}}$
& Test $D_f$ & Test $D_r^{\text{adj}}$ & Test $D_r^{\text{rem}}$ \\
\midrule
CIFAR-100 & Retrain
& $38.40_{\pm 3.80}$ & $99.98_{\pm 0.02}$ & $99.99_{\pm 0.00}$
& $37.00_{\pm 5.10}$ & $83.92_{\pm 4.20}$ & $83.37_{\pm 0.31}$ \\
CIFAR-100 & Ours
& $0.00_{\pm 0.00}$ & $98.17_{\pm 0.31}$ & $98.44_{\pm 0.05}$
& $2.33_{\pm 0.47}$ & $78.17_{\pm 0.31}$ & $81.10_{\pm 0.18}$ \\
\midrule
TinyImageNet & Retrain
& $62.82_{\pm 5.59}$ & $99.46_{\pm 0.63}$ & $97.99_{\pm 2.01}$
& $64.45_{\pm 4.79}$ & $95.71_{\pm 0.23}$ & $90.58_{\pm 0.12}$ \\
TinyImageNet & Ours
& $0.00_{\pm 0.00}$ & $98.95_{\pm 0.08}$ & $98.49_{\pm 0.09}$
& $3.11_{\pm 0.31}$ & $91.27_{\pm 0.78}$ & $88.88_{\pm 0.54}$ \\
\midrule
ToxiGen & Retrain
& $8.58_{\pm 0.66}$ & $93.71_{\pm 2.08}$ & $91.50_{\pm 1.44}$
& $12.13_{\pm 4.34}$ & $91.74_{\pm 2.45}$ & $89.35_{\pm 2.73}$ \\
ToxiGen & Ours
& $11.95_{\pm 0.02}$ & $88.88_{\pm 0.01}$ & $92.73_{\pm 0.01}$
& $14.29_{\pm 0.06}$ & $85.86_{\pm 0.00}$ & $85.23_{\pm 0.01}$ \\
\bottomrule
\end{tabular}
}}
\end{table}

\section{Use of LLM}
In preparing this manuscript, we employed a large language model (LLM) solely to assist with refining and polishing the text. The LLM was used to improve clarity, coherence, and readability, as well as to ensure consistent terminology throughout the paper. Importantly, all technical content, experimental design, and results were independently developed and verified by the authors; the LLM did not contribute to any scientific or methodological aspects of the work.

\end{document}